\documentclass{article}
\usepackage{iclr2027_conference,times}
\iclrfinalcopy

\usepackage[utf8]{inputenc}
\usepackage[T1]{fontenc}
\usepackage{hyperref}
\usepackage{url}
\usepackage{booktabs}
\usepackage{amsfonts}
\usepackage{amsmath,amssymb,amsthm}
\usepackage{nicefrac}
\usepackage{microtype}
\usepackage[table]{xcolor}
\usepackage{xspace}
\usepackage{graphicx}
\usepackage{subcaption}
\usepackage{multirow}
\usepackage{tikz}
\usepackage{pgfplots}
\pgfplotsset{compat=1.18}
\usetikzlibrary{arrows.meta,positioning,fit,backgrounds,decorations.pathreplacing}

\newtheorem{theorem}{Theorem}
\newtheorem{proposition}{Proposition}
\newtheorem{lemma}{Lemma}
\newtheorem{corollary}{Corollary}
\newtheorem{definition}{Definition}
\newtheorem{remark}{Remark}

\newcommand{\method}{Hyper-Fold\xspace}
\newcommand{\methoddeep}{Hyper-Fold-Deep\xspace}
\newcommand{\methodpocket}{Hyper-Fold-Pocket\xspace}

\newcommand{\vx}{\mathbf{x}}
\newcommand{\vdelta}{\boldsymbol{\delta}}
\newcommand{\vh}{\mathbf{h}}
\newcommand{\mW}{\mathbf{W}}
\newcommand{\mU}{\mathbf{U}}
\newcommand{\mV}{\mathbf{V}}
\newcommand{\mB}{\mathbf{B}}

\newcommand{\mE}{\mathbf{E}}
\newcommand{\mP}{\mathbf{P}}
\newcommand{\mX}{\mathbf{X}}
\newcommand{\mK}{\mathbf{K}}
\newcommand{\mM}{\mathbf{M}}
\newcommand{\mQ}{\mathbf{Q}}

\title{Hyper-Fold: Exploring the Expressive Limit of\\Sequence-Geometry Learning for Proteins via Hypergraph Modeling}

\author{Yifan Feng$^{1}$ \quad Guanjie Cheng$^{2}$ \quad Shihui Ying$^{3}$ \quad Shaoyi Du$^{4}$ \quad Yue Gao$^{1}$\\
$^{1}$\{School of Software, BNRist, THUIBCS\}, Tsinghua University, Beijing 100084, China\\
$^{2}$School of Software Technology, Zhejiang University, Ningbo 315100, China\\
$^{3}$Shanghai Institute of Applied Mathematics and Mechanics,\\
\quad School of Mechanics and Engineering Science, Shanghai University, Shanghai 200072, China\\
$^{4}$State Key Laboratory of Human-Machine Hybrid Augmented Intelligence,\\
\quad Institute of Artificial Intelligence and Robotics, Xi'an Jiaotong University, Xi'an 710049, China\\
\texttt{evanfeng97@gmail.com; chengguanjie@zju.edu.cn; shying@shu.edu.cn;}\\
\texttt{dushaoyi@xjtu.edu.cn; gaoyue@tsinghua.edu.cn}
}

\begin{document}

\maketitle

\begin{abstract}
Protein structure modeling rests on a single computational primitive: the interaction
between what a residue \emph{is} (sequence content) and \emph{where it sits}
(three-dimensional geometry). What is the expressive limit of this layer class? We show
that the complete bilinear operator
over content--geometry outer products---the sufficient statistic of all second-order
interactions---is the expressive ceiling, while the additive message passing of
mainstream geometric GNNs is provably blind to content--geometry binding. We then
introduce \method, a rank-$K$ separable convolutional backbone approaching this ceiling
at message-passing cost: each radius neighborhood is organized into a sequence hyperedge
and a contact hyperedge, modulated by an edge-conditioned matrix-valued operator
factorized into $K$ learned basis operators with geometry-generated coefficients.
Across enzyme function prediction, fold classification, and ligand binding site
detection, \method and its hierarchical variant \methoddeep achieve the best results
among protein-specific structure encoders; \methodpocket, an anchored set-prediction
head, surpasses UniSite-3D on UniSite-DS and two zero-shot benchmarks with no sequence
language model features, $68\times$ fewer parameters, and $4.8\times$ lower
latency---suggesting that a sufficiently expressive 3D backbone recovers information
that fusion architectures previously borrowed from evolution-scale pretraining.
\end{abstract}

\section{Introduction}
\label{sec:intro}

Protein function is determined by three-dimensional structure, and structure modeling is
dominated by a single computational primitive: the \emph{interaction} between what
a residue \emph{is} (its sequence/content features) and \emph{where it sits} relative to its
neighbors (three-dimensional geometry). Message-passing (MP) architectures
\citep{jing2021gvp,zhang2023gearnet,fan2023cdconv} implement this primitive by propagating
geometry-aware messages over residue graphs and remain the most effective structure encoders
at modest scale \citep{jamasb2024benchmark}; protein language models and hybrid variants
\citep{lin2023esm2,su2024saprot,hayes2025esm3} fold structure into large-scale sequence
pretraining. Yet beneath this diversity, no one has systematically answered a basic question:
\emph{what is the expressive limit of sequence--geometry interaction layers, and where do
existing models sit relative to that limit?} We answer this question: any second-order
statistic of a neighbor's content and geometry is a linear functional of their outer
product $\vx_j \otimes \phi(\vdelta_{ij})$, so the \emph{complete bilinear operator}
over this outer product is the sufficient statistic---and hence the expressive
limit---of sequence--geometry interaction (Theorem~\ref{thm:ceiling}). The result is an
expressivity ladder (Figure~\ref{fig:teaser}(a)).

\begin{figure}[t]
\centering
\begin{subfigure}[t]{0.30\linewidth}
\centering
\resizebox{\linewidth}{!}{
\definecolor{npgcoral}{HTML}{E64B35}
\begin{tikzpicture}[
    font=\small,
    rung/.style={draw, rounded corners=2pt, align=center, minimum width=3.75cm, minimum height=1.0cm, inner sep=4pt},
  ]
  \node[rung, fill=blue!5]  (add) at (0,0)   {\textbf{Additive MP}\\[1pt] $\sum_j \mW\vx_j + \mU\vdelta_{ij}$\\[1pt] {\scriptsize\textcolor{gray!65!black}{GCN; EdgeConv; GearNet}}};
  \node[rung, fill=blue!10]  (sca) at (0,1.5) {\textbf{Scalar-gated}\\[1pt] $\sum_j \alpha(\vdelta_{ij})\,\mV\vx_j$\\[1pt] {\scriptsize\textcolor{gray!65!black}{GAT; Graphormer}}};
  \node[rung, fill=blue!16] (cha) at (0,3.0) {\textbf{Channel-gated}\\[1pt] $\sum_j \mathbf{w}(\vdelta_{ij})\odot\vx_j$\\[1pt] {\scriptsize\textcolor{gray!65!black}{SchNet; Point Transformer}}};
  \node[rung, fill=npgcoral!12, draw=npgcoral!75!black, line width=0.7pt] (mat) at (0,4.5)
    {\textbf{\textcolor{npgcoral!70!black}{Matrix-gated (Ours)}}\\[1pt] $\sum_j \mW(\vdelta_{ij})\,\vx_j$\\[1pt] {\scriptsize\textcolor{npgcoral!65!black}{\method}}};
  \node[rung, fill=gray!7, draw=gray!60!black, dashed] (bil) at (0,6.0)
    {\textbf{Complete bilinear}\\[1pt] $\sum_j \mathcal{T}(\vx_j \otimes \vdelta_{ij})$\\[1pt] {\scriptsize\textcolor{gray!60!black}{expressive ceiling (Thm.~\ref{thm:ceiling})}}};

  \draw[-{Stealth[length=3.2mm,width=2.4mm]}, line width=2.4pt, gray!55!black]
    (-2.42,-0.25) -- (-2.42,6.32);
  \node[rotate=90, gray!45!black, font=\small] at (-2.72,3.05) {Expressivity};
  \node[gray!45!black, font=\small, anchor=north] at (-2.42,-0.44) {Low};
  \node[gray!45!black, font=\small, anchor=south] at (-2.42,6.50) {High};
\end{tikzpicture}}
\caption{The expressivity ladder.}
\label{fig:teaser-ladder}
\end{subfigure}\hfill
\begin{subfigure}[t]{0.61\linewidth}
\centering
\includegraphics[width=\linewidth]{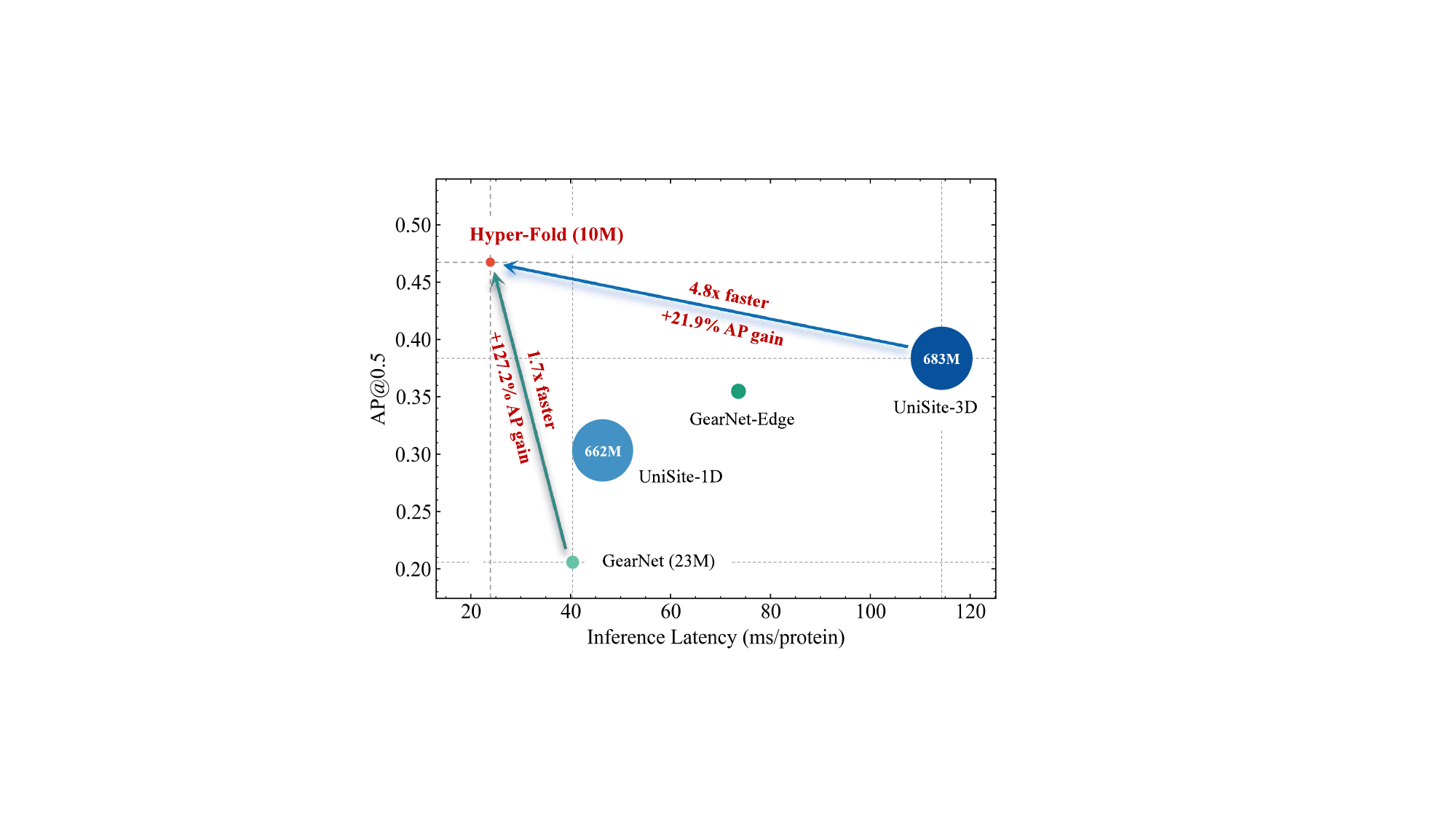}
\caption{Accuracy vs.\ Latency on pocket detection task.}
\label{fig:teaser-lat}
\end{subfigure}
\caption{Overview. \textbf{(a)} The expressivity ladder of sequence--geometry
layers: additive message passing occupies the lowest rung, while \method's rank-$K$
matrix gating approaches the complete-bilinear ceiling at message-passing cost.
\textbf{(b)} Pocket AP@0.5 on UniSite-DS vs.\ inference latency; bubble area scales
with parameter count.}
\label{fig:teaser}
\end{figure}

Situating existing models on this ladder exposes a structural blind spot. The additive message
used by mainstream geometric GNNs, $\mW\vx_j + \mU\vdelta_{ij}$, factorizes into a content term
and a geometry term; consequently it is provably invariant to \emph{binding swaps}---pairs of
structures whose node features and edge features are identical as multisets but whose
content--geometry pairings differ (Proposition~\ref{prop:binding}). Such binding information is
physically real (which residue type occupies which geometric position is precisely what
distinguishes a catalytic site from a geometrically identical decoy), yet it is invisible to the
lowest rung of the ladder. This gap belongs to the layer class: GearNet explicitly
motivated its additive message by the prohibitive cost of the
alternative \citep{zhang2023gearnet}---cost, not expressivity, was the bottleneck.

We therefore ask whether the ceiling can be \emph{affordably} approached. Our answer is
\method, a rank-$K$ separable convolutional backbone. Inspired by hypergraph modeling
\citep{feng2019hgnn,yadati2019hypergcn}, each layer organizes a radius neighborhood into
a \emph{sequence hyperedge} and a \emph{contact hyperedge}---sequence adjacency along
the chain versus spatial contact in the folded state---and applies to both an
edge-conditioned \emph{matrix-valued} operator $\mW(\vdelta_{ij})\,\vx_j$, factorized
as $\mW(\vdelta) = \sum_{c=1}^{K} g_c(\vdelta)\,\mB_c$ with geometry-generated
coefficients and learned basis operators. The operator spectrum decays rapidly:
$K{=}8$ captures 90--98\% of the learned full-operator energy and accuracy saturates
beyond it (Proposition~\ref{prop:rankk}), so the ceiling is approachable at
message-passing cost. The same operator yields two backbones: \method at full residue
resolution and \methoddeep for protein-level prediction.

We validate \method on three tasks that span the uses of protein structure encoders:
enzyme commission (EC) number prediction, fold classification, and ligand binding site
(pocket) detection. \method improves over GearNet-Edge on EC at every sequence-identity cutoff
(e.g., 0.864 vs.\ 0.810 Fmax at the 95\% cutoff) and on all three fold-classification splits.
For pocket detection we attach \methodpocket, an anchored set-prediction head on the
residue-level backbone: it reaches 0.617 AP@0.3 on UniSite-DS (UniSite-3D: 0.560) and
generalizes zero-shot to HOLO4K and COACH420, exceeding UniSite-3D on all six reported
metrics, \emph{without any sequence language model features}. This suggests that a sufficiently
expressive 3D backbone recovers information that fusion architectures previously borrowed
from evolution-scale pretraining.

Our contributions are: (i) an expressivity ladder for sequence--geometry layers, with
the complete bilinear operator characterized as its ceiling (Thm.~\ref{thm:ceiling}) and
additive message passing proved blind to content--geometry binding
(Prop.~\ref{prop:binding}); (ii) \method, a rank-$K$ separable convolutional backbone
that approaches this ceiling at message-passing cost; and (iii) experiments on three
tasks, including state-of-the-art pocket detection and zero-shot generalization without
sequence-model features, at essentially no extra efficiency cost.

\paragraph{Related work.} Pocket detection has evolved from geometry tools
\citep{leguilloux2009fpocket,krivak2018p2rank} to set-prediction formulations
\citep{fan2025unisite}, the formulation \methodpocket adopts with a purely structural
encoder. An extended discussion of structure encoders, sequence models, graph
expressivity, and hypergraph modeling is deferred to Appendix~\ref{app:related}.

\section{The Expressive Ladder of Sequence--Geometry Layers}
\label{sec:theory}

A protein of $L$ residues is a graph $\mathcal{G}=(\mathcal{V},\mathcal{E})$: node $i$
carries a \emph{content} feature $\vx_i \in \mathbb{R}^{C}$ (learned residue embeddings),
and each ordered edge $(i,j)\in\mathcal{E}$ within a radius cutoff carries a
\emph{geometry} feature $\vdelta_{ij}\in\mathbb{R}^{d_g}$---a 7-dimensional SE(3)-invariant
descriptor of the relative placement of residues $i$ and $j$ (distance, direction, and
orientation terms)---so all layers below are SE(3)-invariant by construction. A structure
encoder stacks layers that update node states $\vh_i$ by aggregating over the neighborhood
$\mathcal{N}(i)$. We classify such aggregation layers by the \emph{algebraic form}
of the interaction between a neighbor's content $\vx_j$ and the edge geometry
$\vdelta_{ij}$.

\begin{definition}[Five classes of sequence--geometry interaction layers]
\label{def:classes}
With learnable parameters suppressed:
\begin{align}
\text{(additive MP)}\quad & \vh_i = \textstyle\sum_{j\in\mathcal{N}(i)} \big(\mW\vx_j + \mU\vdelta_{ij}\big), \label{eq:additive}\\
\text{(scalar-gated)}\quad & \vh_i = \textstyle\sum_{j\in\mathcal{N}(i)} \alpha(\vdelta_{ij})\,\mV\vx_j, \quad \alpha(\cdot)\in\mathbb{R}, \label{eq:scalar}\\
\text{(channel-gated)}\quad & \vh_i = \textstyle\sum_{j\in\mathcal{N}(i)} \mathbf{w}(\vdelta_{ij})\odot\vx_j, \quad \mathbf{w}(\cdot)\in\mathbb{R}^{C}, \label{eq:channel}\\
\text{(matrix-gated)}\quad & \vh_i = \textstyle\sum_{j\in\mathcal{N}(i)} \mW(\vdelta_{ij})\,\vx_j, \label{eq:matrix}\\
\text{(complete bilinear)}\quad & \vh_i = \textstyle\sum_{j\in\mathcal{N}(i)} \mathcal{T}\big(\vx_j \otimes \phi(\vdelta_{ij})\big), \label{eq:bilinear}
\end{align}
where $\alpha$ is a scalar gate, $\mathbf{w}$ a per-channel (diagonal) gate, $\mW(\cdot)$ a
\emph{matrix-valued} kernel, $\phi$ a geometry feature map (the identity, or any richer
expansion of $\vdelta$), and $\mathcal{T}$ an arbitrary linear map on the outer product.
\end{definition}

Class~\eqref{eq:additive} is the message form of GCN, EdgeConv, and GearNet(-Edge)
\citep{kipf2017gcn,wang2019dgcnn,zhang2023gearnet}; class~\eqref{eq:scalar} covers
attention-style aggregation (GAT, Graphormer)
\citep{velickovic2018gat,ying2021graphormer,vaswani2017attention};
class~\eqref{eq:channel} covers continuous-filter convolutions and vector attention (SchNet,
Point Transformer) \citep{schutt2017schnet,zhao2021pointtransformer};
classes~\eqref{eq:matrix}--\eqref{eq:bilinear} are progressively richer.

\begin{proposition}[Containment]
\label{prop:containment}
Additive MP, scalar-gated, and channel-gated layers are degenerate special cases of
matrix-gated layers (after augmenting $\vx$ with a constant channel for the bias term):
channel gating is $\mW(\vdelta)$ restricted to diagonal matrices, scalar gating further ties
the diagonal entries into a single factor $\alpha(\vdelta)\mV$, and additive MP is recovered by
a rank-1, geometry-affine operator. Complete bilinear layers contain matrix-gated layers:
expanding the kernel in a basis of the matrix space,
$\mW(\vdelta)\vx=\sum_{c}g_c(\vdelta)\,\mB_c\vx$, the map is linear in
$\vx\otimes\phi(\vdelta)$ whenever the coordinates of $\phi$ represent the coefficient
functions $g_c$; a universal $\phi$ represents every continuous kernel
(proofs in Appendix~\ref{app:proofs}).
\end{proposition}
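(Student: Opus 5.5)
The plan is to prove each containment by exhibiting the matrix-valued kernel $\mW(\cdot)$, or the linear map $\mathcal{T}$ and feature map $\phi$, that reproduces the smaller class exactly. Throughout, augment the content as $\tilde{\vx}_j=(\vx_j,1)\in\mathbb{R}^{C+1}$ and let kernels map $\mathbb{R}^{C+1}\to\mathbb{R}^{C'}$. Every layer in Definition~\ref{def:classes} is a sum over $j\in\mathcal{N}(i)$ of a per-edge message. It therefore suffices to show that the per-edge messages coincide for every pair $(\vx_j,\vdelta_{ij})$; summation over the neighborhood then preserves equality.

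\textbf{Channel-gated, scalar-gated, and additive layers.} Set $\mW(\vdelta)=[\operatorname{diag}(\mathbf{w}(\vdelta))\;\;\mathbf{0}]$, so that $\mW(\vdelta)\tilde{\vx}=\mathbf{w}(\vdelta)\odot\vx$. This is the diagonal restriction.

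For scalar gating, set $\mW(\vdelta)=\alpha(\vdelta)[\mV\;\;\mathbf{0}]$. This is a single fixed matrix scaled by one scalar function; when $\mV=\mathbf{I}$ it is the diagonal case with all diagonal entries tied. I will state that the general $\mV$ case is ``tied'' in the sense of a single rank-one coefficient family, rather than literally diagonal, which is how I read the statement.

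For additive MP, set
\[
\mW(\vdelta)=[\mW\;\;\mU\vdelta],
\]
so that $\mW(\vdelta)\tilde{\vx}=\mW\vx+\mU\vdelta$. The kernel is affine in $\vdelta$. It equals a constant matrix plus $\sum_k \delta_k\,\mathbf{u}_k\mathbf{e}_{C+1}^\top$, where $\mathbf{u}_k$ is the $k$-th column of $\mU$. Hence its geometry-dependent part is a sum of rank-one updates all confined to the bias column. I will interpret ``rank-1, geometry-affine'' as referring to this single-column dependence: the $\vdelta$-dependent part of the kernel varies only in one rank-one direction of the input (the constant channel). I expect this wording check to be the only delicate point in this half.

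\textbf{Complete bilinear contains matrix-gated.} Fix a basis $\{\mB_c\}$ of the matrix space, for example the elementary matrices $\mathbf{E}_{ab}$, and expand $\mW(\vdelta)=\sum_c g_c(\vdelta)\mB_c$ with coefficient functions $g_c$.

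Suppose $\phi$ has coordinates $\phi_c=g_c$. Define $\mathcal{T}$ on the simple tensors $\vx\otimes\mathbf{e}_c$ by $\mathcal{T}(\vx\otimes\mathbf{e}_c)=\mB_c\vx$ and extend linearly. Then
\[
\mathcal{T}(\vx\otimes\phi(\vdelta))=\sum_c g_c(\vdelta)\mB_c\vx=\mW(\vdelta)\vx.
\]
More generally, suppose each $g_c$ is a linear combination $g_c=\sum_m a_{cm}\phi_m$ of the coordinates of $\phi$. Then take $\mathcal{T}(\vx\otimes\mathbf{e}_m)=\sum_c a_{cm}\mB_c\vx$. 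This is exactly the meaning of ``coordinates of $\phi$ represent the $g_c$''.

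\textbf{Universal $\phi$.} The final clause should be read as an approximation statement. Take $\phi$ to be an infinite (or sufficiently rich) universal feature expansion, for example polynomial features or random Fourier features on the compact domain of the bounded invariant descriptors. Apply Stone–Weierstrass to each of the finitely many continuous entries $g_c$. This gives $\mathcal{T}$ with $\sup_{\vdelta}\|\mathcal{T}(\vx\otimes\phi(\vdelta))-\mW(\vdelta)\vx\|\le\epsilon\|\vx\|$, and summing over the finite neighborhood preserves the bound.

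The main obstacle is this exact-versus-approximate distinction. I would phrase the clause as exact representation for a fixed continuous kernel whose coefficient functions lie in the span of $\phi$'s coordinates, and as $\epsilon$-approximation uniformly on compacta when $\phi$ is universal.
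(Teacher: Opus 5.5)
Your proposal is correct and follows the paper's proof essentially step for step. The diagonal kernel for channel gating, the affine kernel $[\mW\;\;\mU\vdelta]$ on the constant-augmented content for additive MP, and the map $\mathcal{T}(\vx\otimes\mathbf{e}_c)=\mB_c\vx$ for the bilinear containment are exactly the paper's constructions.

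The deviations are minor and, if anything, cleaner:
\begin{itemize}
\item You embed scalar gating directly as the kernel $\alpha(\vdelta)[\mV\;\;\mathbf{0}]$. The paper instead absorbs $\mV$ into the content, so that scalar gating becomes channel gating with the tied gate $\alpha(\vdelta)\mathbf{1}$. That reduction changes the input, and the resulting kernel is literally diagonal only when $\mV$ is.
\item You read the universal-$\phi$ clause as a uniform $\epsilon$-approximation with finitely many features. The paper asserts exact representation, by letting the coordinates of $\phi$ contain the $g_c$ or form a Schauder basis of the continuous functions on the compact edge domain. That implicitly requires an infinite-dimensional $\phi$.
\end{itemize}
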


\begin{proposition}[Additive layers are blind to content--geometry binding]
\label{prop:binding}
Fix a center node $i$ with two neighbors and consider two configurations that differ only by a
\emph{binding swap}: configuration A pairs $(\vx_1,\vdelta_1),(\vx_2,\vdelta_2)$ while
configuration B pairs $(\vx_1,\vdelta_2),(\vx_2,\vdelta_1)$, with $\vx_1\neq\vx_2$ and
$\vdelta_1\neq\vdelta_2$. Then every additive MP layer \eqref{eq:additive} produces identical
$\vh_i$ for A and B, because its output factorizes as
$\sum_j \mW\vx_j + \sum_j \mU\vdelta_{ij}$. In contrast, there exist matrix-gated operators
\eqref{eq:matrix} that distinguish A from B, e.g.\ whenever
$\mW(\vdelta_1)(\vx_1-\vx_2) \neq \mW(\vdelta_2)(\vx_1-\vx_2)$ (proof in
Appendix~\ref{app:proofs}).
\end{proposition}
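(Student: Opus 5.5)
The argument has two parts: an invariance computation for the additive class, and an explicit witness for the matrix-gated class.

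\emph{Blindness of additive MP.} For any parameters $\mW,\mU$, linearity of the summand in \eqref{eq:additive} splits the output as
\[
\vh_i^{A} = \mW\vx_1 + \mW\vx_2 + \mU\vdelta_1 + \mU\vdelta_2 .
\]
Configuration B replaces the pairs by $(\vx_1,\vdelta_2)$ and $(\vx_2,\vdelta_1)$, so
\[
\vh_i^{B} = \mW\vx_1 + \mU\vdelta_2 + \mW\vx_2 + \mU\vdelta_1 .
\]
This is the same set of four vectors, reordered, so $\vh_i^A=\vh_i^B$ by commutativity of addition. More conceptually, the output depends only on the multisets $\{\vx_j\}$ and $\{\vdelta_{ij}\}$ separately, and a binding swap preserves both. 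The same argument covers a constant bias (absorbed as in Proposition~\ref{prop:containment}). It also covers any pointwise nonlinearity applied after the sum, since the pre-activations already coincide. I would note this so the claim reads as a statement about the layer class, not just the linear map.

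\emph{Distinguishability by matrix gating.} For a matrix-valued kernel $\mW(\cdot)$, subtracting the two outputs gives
\[
\vh_i^A-\vh_i^B = \mW(\vdelta_1)\vx_1+\mW(\vdelta_2)\vx_2-\mW(\vdelta_2)\vx_1-\mW(\vdelta_1)\vx_2 = \big(\mW(\vdelta_1)-\mW(\vdelta_2)\big)(\vx_1-\vx_2).
\]
This is nonzero exactly when $\mW(\vdelta_1)(\vx_1-\vx_2)\neq\mW(\vdelta_2)(\vx_1-\vx_2)$, which is the stated condition.

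It remains to show that such an operator exists for arbitrary $\vx_1\neq\vx_2$ and $\vdelta_1\neq\vdelta_2$. Set $\mathbf{u}=\vx_1-\vx_2\neq 0$. Since $\vdelta_1\neq\vdelta_2$, some linear functional $\ell$ satisfies $\ell(\vdelta_1)\neq\ell(\vdelta_2)$; take $\ell(\vdelta)=\mathbf{v}^\top\vdelta$ with $\mathbf{v}=\vdelta_1-\vdelta_2$. Define
\[
\mW(\vdelta)=\ell(\vdelta)\,\mathbf{I}_C .
\]
This is a rank-one separable kernel, i.e.\ the case $K=1$ with $g_1=\ell$ and $\mB_1=\mathbf{I}$, and it is continuous. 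Then
\[
\vh_i^A-\vh_i^B=\big(\ell(\vdelta_1)-\ell(\vdelta_2)\big)\,\mathbf{u}=\|\vdelta_1-\vdelta_2\|^2\,\mathbf{u}\neq 0 .
\]

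\emph{Main obstacle.} The only real care needed is in stating the existence claim at the right level of generality. The witness must be admissible in the paper's parameterization, so it helps that a single basis operator with a linear geometry coefficient suffices. That same witness also lies in the scalar-gated class with $\mV=\mathbf{I}$, which does not weaken the proposition: the proposition only asserts the additive class is blind, and this shows binding becomes visible as soon as geometry multiplies content.
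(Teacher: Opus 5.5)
Your proof is correct and follows the paper's argument: commutativity of the factorized sum for the additive class, and the identity $\vh_i^A-\vh_i^B=\big(\mW(\vdelta_1)-\mW(\vdelta_2)\big)(\vx_1-\vx_2)$ for matrix gating. The only difference is the witness. The paper takes a continuous kernel with $\mW(\vdelta_1)=I$, $\mW(\vdelta_2)=0$ and appeals to universal weight networks to approximate it. Your explicit linear kernel $\ell(\vdelta)\,I$ with $\ell(\vdelta)=(\vdelta_1-\vdelta_2)^{\top}\vdelta$ is exactly realizable as a rank-$1$ separable kernel and needs no approximation step.
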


Proposition~\ref{prop:binding} is not a corner case: additive layers cannot represent
\emph{any} function of how content and geometry are paired within a neighborhood---which
residue type sits at which geometric position. Gated layers distinguish binding swaps with
increasing channel structure---one direction shared by all channels (scalar), $C$
directions that never mix (channel), full cross-channel interaction (matrix)---so the
ladder is a hierarchy of how richly content and geometry may be bound.

\begin{theorem}[The bilinear ceiling]
\label{thm:ceiling}
Every layer whose message is a second-order (bilinear) function of a neighbor's
content $\vx_j$ and geometry features $\phi(\vdelta_{ij})$ factors through the outer
product $\vx_j \otimes \phi(\vdelta_{ij})$; conversely, class~\eqref{eq:bilinear} realizes
all such functions. Hence the complete bilinear operator is the sufficient statistic of
second-order sequence--geometry interaction and constitutes the expressive ceiling of
sequence--geometry layers.
\end{theorem}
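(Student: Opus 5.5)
The plan is to reduce Theorem~\ref{thm:ceiling} to the universal property of the tensor product. Write $V=\mathbb{R}^{C}$ for content and $G$ for the space in which $\phi(\vdelta_{ij})$ lives. I will take this to be $\mathbb{R}^{m}$; an infinite-dimensional feature space can be handled the same way with the algebraic tensor product. I interpret ``second-order message'' as a map $\beta: V\times G\to\mathbb{R}^{C'}$ that is linear in each argument separately, applied to $(\vx_j,\phi(\vdelta_{ij}))$ and summed over $j\in\mathcal{N}(i)$. The theorem then splits into two directions, each a statement about single messages. Both pass unchanged through the neighborhood sum, because summation is linear and $\mathcal{T}$ acts edgewise.

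\textbf{Factorization (every second-order message factors through the outer product).} Fix bases $\{e_a\}$ of $V$ and $\{f_b\}$ of $G$. By bilinearity,
\[
\beta(\vx,\mathbf{g})=\textstyle\sum_{a,b} x_a g_b\,\beta(e_a,f_b).
\]
Define the linear map $\mathcal{T}$ on $V\otimes G\cong\mathbb{R}^{C\times m}$ by $\mathcal{T}(e_a\otimes f_b)=\beta(e_a,f_b)$, extended linearly. Since $(\vx\otimes\mathbf{g})_{ab}=x_a g_b$, we get $\beta(\vx,\mathbf{g})=\mathcal{T}(\vx\otimes\mathbf{g})$. For uniqueness, the pure tensors span $V\otimes G$, so $\mathcal{T}$ is determined by $\beta$. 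This gives a linear isomorphism between bilinear maps $V\times G\to\mathbb{R}^{C'}$ and linear maps $V\otimes G\to\mathbb{R}^{C'}$. The same argument also covers affine-in-each-argument (bias) terms: augment $\vx$ and $\phi$ with a constant channel, as in Proposition~\ref{prop:containment}.

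\textbf{Realization (class~\eqref{eq:bilinear} gives only such functions).} Conversely, for any linear $\mathcal{T}$, the map $(\vx,\mathbf{g})\mapsto\mathcal{T}(\vx\otimes\mathbf{g})$ is bilinear, because $\otimes$ is bilinear and $\mathcal{T}$ is linear. So class~\eqref{eq:bilinear} is exactly the set of second-order messages, neither more nor less.

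\textbf{Sufficiency and the ceiling.} Every second-order statistic of the edge $(\vx_j,\phi(\vdelta_{ij}))$ is a linear functional of $\vx_j\otimes\phi(\vdelta_{ij})$. Summing over neighbors, every aggregated second-order statistic of the neighborhood is a linear functional of $S_i=\sum_j \vx_j\otimes\phi(\vdelta_{ij})$. In this sense $S_i$ is a sufficient statistic. Combined with Proposition~\ref{prop:containment}, which places the additive, scalar-, channel- and matrix-gated classes inside class~\eqref{eq:bilinear}, this shows that class~\eqref{eq:bilinear} is the top rung among second-order layers.

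The main obstacle is not the algebra, which is routine, but stating precisely what ``expressive ceiling'' means. Matrix-gated layers with nonlinear $\mW(\cdot)$ are not bilinear in $\vdelta$ itself. To keep them inside the class, I will make ``second order'' relative to a fixed feature map $\phi$ and invoke the last clause of Proposition~\ref{prop:containment}: a sufficiently rich (universal) $\phi$ represents the coefficient functions $g_c$. Concretely, I will state the ceiling for each fixed $\phi$. For universal $\phi$, I will note that it holds up to uniform approximation of continuous kernels on the compact set of admissible $\vdelta$, which is bounded because of the radius cutoff.
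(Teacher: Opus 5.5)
Your proposal is correct and is essentially the paper's proof: both directions rest on the universal property of the tensor product, which you verify explicitly in coordinates rather than citing, and linearity carries the per-edge factorization through the neighborhood sum. The differences are minor: the paper also asserts that layer stacking preserves the class, which the single-layer statement does not need, and your caveat that the ceiling is exact for a fixed $\phi$ but only approximate for a universal $\phi$ is more careful than the paper's phrasing.
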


The ceiling is not directly usable: $\mathcal{T}$ has $O(C^2 d_\phi)$ degrees of freedom
per edge type. The question becomes how much structure the operator actually has.

\begin{proposition}[Rank-$K$ sufficiency, empirical]
\label{prop:rankk}
Writing the matrix-valued kernel in a learned basis,
$\mW(\vdelta) = \sum_{c=1}^{K} g_c(\vdelta)\,\mB_c$,
we find that the spectrum of the learned full operators decays rapidly: measured on the
trained $K{=}16$ model, the top-$8$ eigendirections capture 89.9--98.2\% of the operator
energy in every layer (Appendix~\ref{app:rankk}), and enlarging $K$ from $8$ to $16$
leaves pocket AP essentially unchanged. Rank-$K$ matrix gating therefore approaches the ceiling
at a small constant cost. A formal approximation bound in terms of the operator spectrum
tail is given in Appendix~\ref{app:proofs} (Proposition~\ref{prop:spectral}).
\end{proposition}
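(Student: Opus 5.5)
The statement immediately above, Proposition~\ref{prop:rankk}, is labeled \emph{empirical}. Only part of it can be proved: the reduction to a spectral statement and the approximation bound. The numerical percentages and the saturation of AP have to be established by measurement. So the plan has two parts. The first is a deterministic argument that the rank-$K$ truncation error is controlled by the spectral tail of the learned operator. The second is a measurement protocol that instantiates that tail on the trained $K{=}16$ model.

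For the formal part, I will treat the kernel as an element of $L^2(\mu;\mathbb{R}^{C\times C})$, where $\mu$ is the empirical distribution of edge geometries $\vdelta_{ij}$. Vectorizing $\mW(\vdelta)$ gives a map $\vdelta\mapsto \mathrm{vec}\,\mW(\vdelta)\in\mathbb{R}^{C^2}$. Its second-moment matrix is $\mathbf{M}=\mathbb{E}_\mu[\mathrm{vec}\,\mW(\vdelta)\,\mathrm{vec}\,\mW(\vdelta)^\top]$, with eigenvalues $\lambda_1\ge\lambda_2\ge\cdots$ and eigenvectors that reshape to matrices $\mB_c$. Projecting $\mathrm{vec}\,\mW(\vdelta)$ onto the top-$K$ eigenvectors yields $\mW_K(\vdelta)=\sum_{c\le K} g_c(\vdelta)\mB_c$ with $g_c(\vdelta)=\langle \mB_c,\mW(\vdelta)\rangle_F$. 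This is exactly the separable form in the statement. By Eckart--Young in the Hilbert--Schmidt setting, the truncation is optimal and satisfies $\mathbb{E}_\mu\|\mW-\mW_K\|_F^2=\sum_{c>K}\lambda_c$.

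I will then propagate this bound to the layer output of Definition~\ref{def:classes}. By Cauchy--Schwarz and the bound $\|\vx_j\|\le R$, the sum over neighbors of $\|(\mW-\mW_K)(\vdelta_{ij})\vx_j\|$ is at most $R\,|\mathcal{N}(i)|$ times the root-mean-square tail. This gives the claimed bound in terms of the spectrum tail. Combined with Proposition~\ref{prop:containment} and Theorem~\ref{thm:ceiling}, it places rank-$K$ gating within $\sqrt{\sum_{c>K}\lambda_c}$ of the full matrix-gated layer, and hence of the ceiling's realization.

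For the empirical part, I will collect every $\vdelta_{ij}$ from the evaluation set. For each layer I will form $\mW(\vdelta)$ from the trained $K{=}16$ model and estimate $\mathbf{M}$, which has rank at most $16$. I will then report the energy ratio $\sum_{c\le 8}\lambda_c/\sum_c\lambda_c$, which should reproduce the 89.9--98.2\% figures. Separately, I will retrain with $K{=}8$ and compare pocket AP.

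The main obstacle is the gap between energy captured and accuracy preserved. A small $L^2$ residual does not by itself imply unchanged downstream AP, because nonlinearities and stacked layers can amplify the error. I would first try a Lipschitz composition argument, bounding the per-layer error and multiplying by the layer Lipschitz constants. The resulting bound is likely loose, so the AP claim stays empirical, which is why the proposition is labeled that way.
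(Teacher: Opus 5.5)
Your formal core is the paper's argument. The paper proves Proposition~\ref{prop:spectral} by orthonormalizing the basis, projecting pointwise, applying Pythagoras, and invoking Ky Fan's maximum principle. That is the standard proof of the Eckart--Young--Schmidt statement you cite for $\vdelta\mapsto\mathrm{vec}\,\mW(\vdelta)$ viewed as an element of $L^2(\mu)\otimes\mathbb{R}^{C'C}$. Both routes give the floor $\sum_{c>K}\lambda_c$ over all bases and coefficient functions, attained by the eigenbasis. Your empirical protocol ($K{=}16$ spectrum, $K{=}8$ versus $16$ retraining) is also the paper's.

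\textbf{The gap: the propagation to the layer output.} This step does not hold as written. The tail $\sum_{c>K}\lambda_c=\mathbb{E}_\mu\|\mW-\mW_K\|_F^2$ is an average over all edges. By contrast, $\|\vh_i-\widehat\vh_i\|$ for a fixed node $i$ is controlled by the errors $\|\Delta\mW_{ij}\|_F$ on that node's own neighbors. Nothing prevents all of those from lying where the truncation error is far above its mean. So ``at most $R\,|\mathcal{N}(i)|$ times the root-mean-square tail'' does not follow, and it fails in general node by node.

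The paper's Corollary~\ref{cor:propagate} avoids this by staying pointwise in per-edge terms, $\|\vh_i-\widehat\vh_i\|_2^2\le\big(\sum_j\|\Delta\mW_{ij}\|_F^2\big)\big(\sum_j\|\mW_v\vx_j\|_2^2\big)$. It uses the spectrum only in expectation (Corollary~\ref{cor:relerr}). If you want a spectral-tail bound at the output, sum your Cauchy--Schwarz estimate over nodes. With $\mu$ the empirical edge distribution and $d_{\max}=\max_i|\mathcal{N}(i)|$, this gives $\frac1L\sum_i\|\vh_i-\widehat\vh_i\|_2^2\le R^2\,d_{\max}\,\frac{|\mathcal{E}|}{L}\sum_{c>K}\lambda_c$. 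That is a node-averaged guarantee, not a per-node one.

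\textbf{Scope of the bound.} It certifies only that the optimal rank-$8$ truncation of the trained $K{=}16$ operators is close to those operators; their $\mathbf{M}$ has rank at most $16$. It does not place a separately trained $K{=}8$ model there, because that model's GK-Net need not realize the projection coefficients (the paper's ``floor, not achieved error'' remark). It also says nothing about an arbitrary complete-bilinear layer, so ``hence of the ceiling's realization'' overreaches.

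\textbf{The measurement.} You should not form $\mathbf{M}\in\mathbb{R}^{C'C\times C'C}$ at all. Write $\mathrm{vec}\,\mW=\mathbf{S}\mathbf{g}$ with $\mathbf{S}=[\mathrm{vec}\,\mB_1,\dots,\mathrm{vec}\,\mB_{16}]$. Then $\mathbf{M}=\mathbf{S}G\mathbf{S}^{\top}$ with $G=\mathbb{E}[\mathbf{g}\mathbf{g}^{\top}]$. Its nonzero spectrum equals that of the $16\times16$ matrix $G^{1/2}(\mathbf{S}^{\top}\mathbf{S})G^{1/2}$, where $\mathbf{S}^{\top}\mathbf{S}$ is the Frobenius Gram matrix of the basis. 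This is exactly the computation in Appendix~\ref{app:rankk}.
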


\section{\method: Matrix-Gated Convolution over Sequence and Contact Hyperedges}
\label{sec:method}

\method consists of structure backbones built from a single convolutional operator,
Fold-Conv (\S\ref{sec:method:layer}, Figure~\ref{fig:arch}), instantiated in two variants
(\S\ref{sec:method:backbone}): \method preserves the sequence length and produces
residue-level features, while \methoddeep interleaves residue pooling and produces
protein-level features. For pocket detection we attach \methodpocket, an anchored
set-prediction head on the residue-level backbone (\S\ref{sec:method:head},
Figure~\ref{fig:head}).

\subsection{Sequence and contact hyperedges}
\label{sec:method:hyper}

A residue relates to its neighborhood through two qualitatively different channels:
\emph{sequence adjacency} along the backbone chain and \emph{spatial contact} in the
folded state. Inspired by hypergraph modeling
\citep{feng2019hgnn,yadati2019hypergcn}---where a hyperedge connects a center vertex to a
\emph{set} of related vertices---we organize the radius neighborhood
$\mathcal{N}(i)=\{j:\|\mathbf{p}_i-\mathbf{p}_j\|<r\}$ of each residue $i$ into two
hyperedges (Figure~\ref{fig:arch}(a)): a \emph{sequence hyperedge} over the sequence
neighbors $\mathcal{N}_{\mathrm{seq}}(i)=\{j\in\mathcal{N}(i):|s_i-s_j|\le \ell/2\}$ and
a \emph{contact hyperedge} over the remaining spatial-contact neighbors
$\mathcal{N}_{\mathrm{con}}(i)=\mathcal{N}(i)\setminus\mathcal{N}_{\mathrm{seq}}(i)$,
where $s_i$ is the sequence index and $\ell$ the sequence-window width. We deliberately
do \emph{not} adopt hypergraph propagation rules; the hyperedge view only organizes the
neighborhood, and both hyperedges are processed by the same matrix-gated convolution
below, whose kernel network treats the two hyperedge types with separate embedding
weights (\S\ref{sec:method:layer}).

The backbone input is a learned residue embedding
$\mE_{\mathrm{res}}\in\mathbb{R}^{L\times 16}$ of the amino-acid type together with the
C$_\alpha$ position matrix $\mP_{\mathrm{res}}\in\mathbb{R}^{L\times 3}$; each residue
additionally carries a local orientation frame $\mathbf{O}_i\in\mathbb{R}^{3\times3}$
built from the neighboring C$_\alpha$ trace.

\begin{figure}[!h]
\centering
\includegraphics[width=1\linewidth]{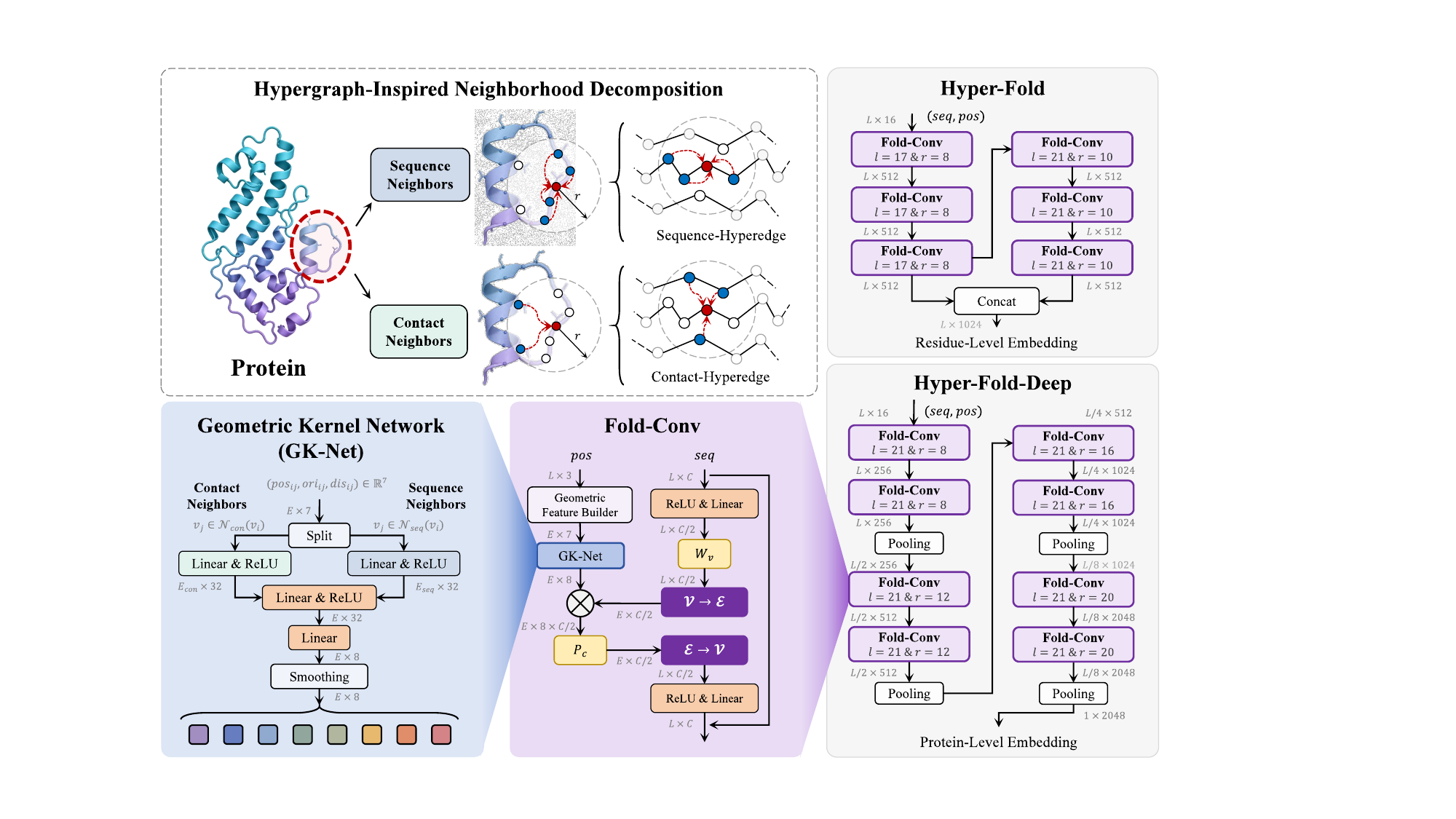}
\caption{\method backbones. Each residue anchors a \emph{sequence hyperedge} and a
\emph{contact hyperedge}; GK-Net maps each edge's geometry feature to rank-$K$ kernel
coefficients, modulating neighbor content in a vertex-to-edge / edge-to-vertex
convolution. \method produces residue-level embeddings; \methoddeep interleaves
residue pooling along a widening channel pyramid. Details
in Appendix~\ref{app:arch}.}
\label{fig:arch}
\end{figure}

\subsection{The Fold-Conv layer}
\label{sec:method:layer}

\paragraph{Geometric feature builder.}
A layer of radius $r$ connects every ordered pair within $\mathcal{N}(i)$ (plus
self-loops) and equips each edge with a 7-dimensional SE(3)-invariant geometry feature
and a clamped sequence displacement:
\begin{equation}
\small
\label{eq:edgefeat}
\vdelta_{ij}=\Big[\,\mathbf{O}_j^{\!\top}\frac{\mathbf{p}_i-\mathbf{p}_j}{\|\mathbf{p}_i-\mathbf{p}_j\|}
\,;\, \textstyle\sum_{k=1}^{3}\mathbf{O}_j^{(k)}\!\circ\mathbf{O}_i^{(k)}
\,;\, \|\mathbf{p}_i-\mathbf{p}_j\|\,\Big]\in\mathbb{R}^{7},
\enspace
\Delta s_{ij}=\mathrm{clamp}(s_i-s_j,\,-\ell/2,\,\ell/2) ,
\end{equation}
where $\mathbf{O}^{(k)}$ denotes the $k$-th frame axis and $\circ$ element-wise
multiplication. We abbreviate the three blocks of $\vdelta_{ij}$ as
$(pos_{ij},ori_{ij},dis_{ij})$: the relative position in $j$'s local frame, the relative
orientation, and the distance (Figure~\ref{fig:arch}(b)). Spatial contact and sequence
displacement thus enter the \emph{same} edge object, and the kernel below is conditioned
jointly on both---a convolution over fused sequence--contact structure rather than two
separate aggregation streams.

\paragraph{Geometric kernel network (GK-Net).}
The kernel coefficients of each edge are generated by a small geometric kernel network
(Figure~\ref{fig:arch}(b)). Its first-layer weights are indexed by a discretized sequence
bucket, which is what separates the two hyperedge types: sequence neighbors occupy the
$\ell$ offset buckets, while contact neighbors---whose sequence displacement saturates
the clamp---share the boundary bucket:
\begin{equation}
\label{eq:weightnet}
\mathbf{g}(\vdelta,\Delta s)=\mathrm{MLP}\Big(\mathrm{LReLU}\big(\vdelta^{\!\top}
\mW_{\mathrm{bucket}(\Delta s)}+\mathbf{b}_{\mathrm{bucket}(\Delta s)}\big)\Big)\in\mathbb{R}^{K},
\end{equation}
with $\ell$ buckets per layer. Bucketing ties parameters across the geometry space on a
coarse sequence grid while $\vdelta$ modulates within each bucket; a smooth edge gate
$\sigma_{ij}\in(0,1)$ softly downweights edges that are simultaneously far in space and
far in sequence (functional form and per-stage $\ell$ values in
Appendix~\ref{app:arch}).

\paragraph{Outer-product message on hyperedges.}
Fold-Conv propagates in two steps (Figure~\ref{fig:arch}(c)): a vertex-to-edge step
($\mathcal{V}\!\to\!\mathcal{E}$) collects the transformed neighbor content
$\mW_v\vx_j$ onto every edge of both hyperedges, where the rank-$K$ kernel modulates it
through an outer product; an edge-to-vertex step ($\mathcal{E}\!\to\!\mathcal{V}$)
aggregates the readout back to the center residue:
\begin{equation}
\label{eq:hfconv}
\vh_i=\sum_{j\in\mathcal{N}(i)} P_c\Big(\sigma_{ij}\,\mathbf{g}(\vdelta_{ij},\Delta s_{ij})\otimes \mW_v\vx_j\Big),
\end{equation}
where $\mathbf{g}(\cdot)\in\mathbb{R}^{K}$ ($K{=}8$) are edge-conditioned kernel
coefficients, $\sigma_{ij}\in(0,1)$ is the smooth edge gate, $\otimes$ is the outer
product, and $P_c:\mathbb{R}^{K\times C}\!\to\!\mathbb{R}^{C'}$ is a pointwise readout.
Reading $P_c$
channel-wise recovers exactly the matrix-gated form of class~\eqref{eq:matrix}
(Lemma~\ref{lem:outerprod}, Appendix~\ref{app:proofs}):
\begin{equation}
\label{eq:hfkernel}
\vh_i=\sum_{j\in\mathcal{N}(i)} \mW(\vdelta_{ij},\Delta s_{ij})\,\vx_j,
\qquad
\mW(\vdelta,\Delta s)=\sum_{c=1}^{K} \tilde g_c(\vdelta,\Delta s)\,\mB_c ,
\end{equation}
with $\tilde g_c=\sigma\,g_c$ and learned basis operators $\mB_c$ given by the channel
blocks of $P_c$ composed with $\mW_v$.
Eq.~\eqref{eq:hfkernel} is the rank-$K$ factorization anticipated by
Proposition~\ref{prop:rankk}: a full matrix-valued kernel per edge at $O(K\,C\,C')$ rather than
$O(C^{2}d_g)$ cost.

The convolution is wrapped in a bottleneck pre-activation residual unit
following the GearNet block design (details in Appendix~\ref{app:arch}).

\subsection{Two backbone variants}
\label{sec:method:backbone}

\paragraph{\method (residue-level).}
Six Fold-Conv blocks are stacked in two stages of three at a fixed width of $512$, with a
wider sequence window and radius in the deeper stage (Figure~\ref{fig:arch}(d), top;
per-stage configurations in Appendix~\ref{app:arch}). A two-scale readout taps the two
stage outputs (blocks 3 and 6), each an $\mathbb{R}^{L\times 512}$ residue-level feature
map that preserves the full sequence resolution. Pocket detection consumes both levels
separately (\S\ref{sec:method:head}), where every residue must be scored; classification
mean-pools each level and concatenates the pooled vectors into a protein embedding in
$\mathbb{R}^{1024}$.

\paragraph{\methoddeep (protein-level).}
For global prediction tasks, Fold-Conv blocks are interleaved with residue pooling along
a widening channel pyramid ($256\!\to\!512\!\to\!1024\!\to\!2048$ channels;
Appendix~\ref{app:arch}), so the spatial resolution shrinks
($L\!\to\!L/2\!\to\!L/4\!\to\!L/8$) while the channel capacity grows
(Figure~\ref{fig:arch}(d), bottom); a final global pooling produces a protein-level
embedding $\mathbf{f}_{\mathrm{prot}}\in\mathbb{R}^{2048}$ that feeds an MLP classifier
for EC and fold classification.

\subsection{\methodpocket: anchored set prediction for pockets}
\label{sec:method:head}

\begin{figure}[!b]
\centering
\includegraphics[width=1\linewidth]{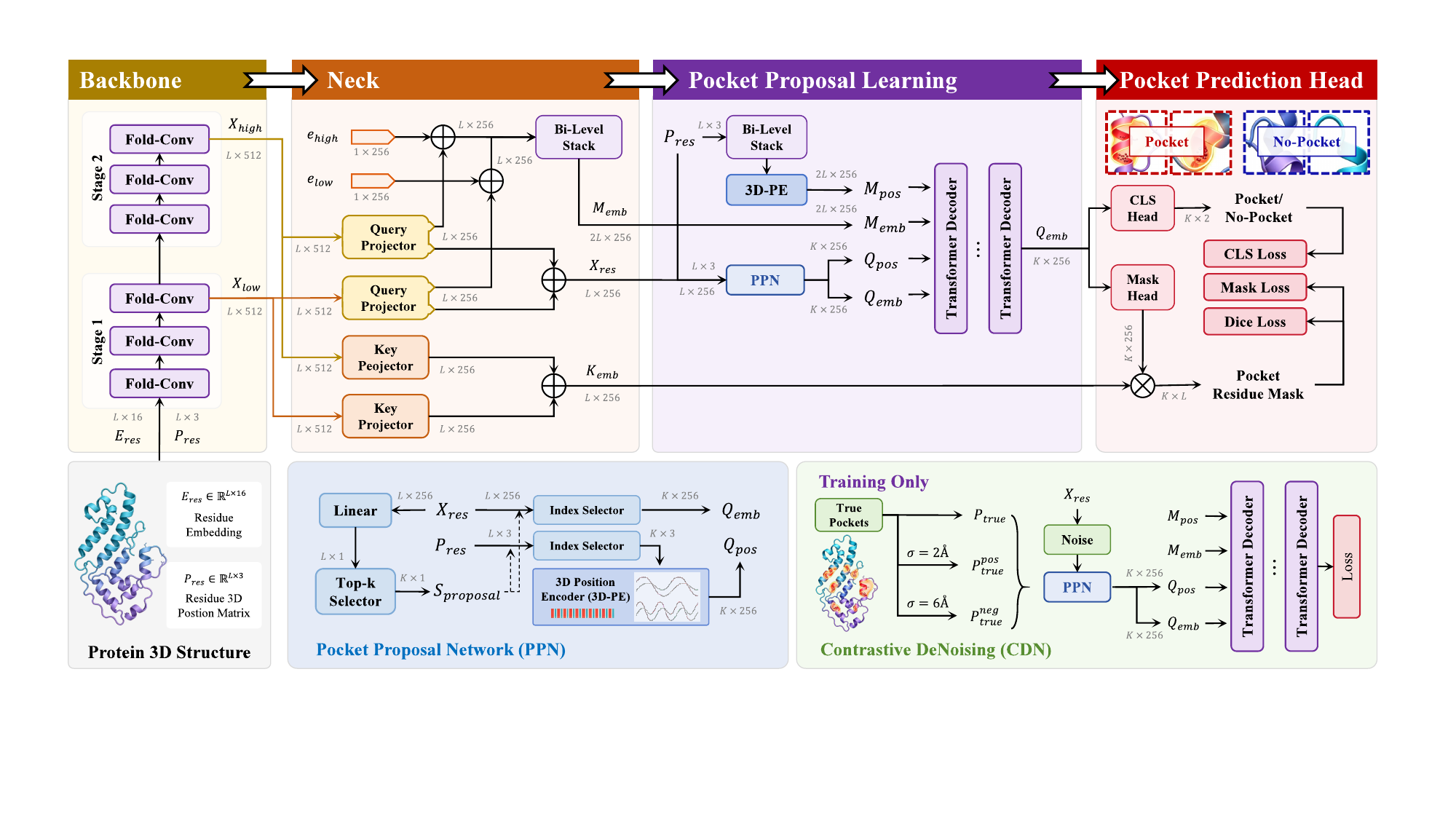}
\caption{\methodpocket pipeline. Two backbone feature levels feed a
query/key/memory neck; the PPN selects structure-anchored queries, refined by a 4-layer
transformer decoder over the bi-level memory; a classification head and a mask head then
read out each pocket. Details in Appendix~\ref{app:arch}.}
\label{fig:head}
\end{figure}

Pocket detection is cast as set prediction \citep{carion2020detr}: the model outputs a
\emph{set} of pockets, each a confidence score paired with a per-residue mask, without
post-processing clustering. \methodpocket builds on the residue-level \method
backbone---every residue must be scored, so the pooling pyramid of \methoddeep does not
apply---and follows a four-stage pipeline (Figure~\ref{fig:head}): a neck projects the
two backbone levels into query, key, and memory features; a pocket proposal network
(PPN) selects structure-anchored queries; a transformer decoder refines them over the
memory; and two heads read out each pocket. Relative to the DETR paradigm, the design
is decoder-only, anchors its queries on the actual 3-D structure, and predicts masks
instead of boxes (detailed comparisons with DETR-style detectors and UniSite are in
Appendix~\ref{app:arch}).

\paragraph{Structure-anchored proposals over a bi-level memory.}
Instead of learning the queries as free parameters, a linear \emph{pocketness} probe
over the fused residue features scores every residue, and the top-$N_q$ ($N_q{=}50$)
become proposals: their own residue features initialize the query content, and their
C$_\alpha$ coordinates, embedded by a three-dimensional Fourier positional encoding
(3D-PE), provide the query position---in the spirit of anchored query selection in
RT-DETR \citep{lv2024rtdetr}, but with anchors that are actual residues. The decoder
memory is likewise structure-preserving: the two backbone levels, tagged by learnable
level embeddings, are stacked into a bi-level memory
$\mM_{\mathrm{emb}}\in\mathbb{R}^{2L\times 256}$, which a 4-layer transformer decoder
cross-attends to directly---with no interposed transformer encoder---injecting the
3D-PE at every layer.

\paragraph{Prediction and training.}
Each refined query $\mathbf{q}_k$ is read out by a classification head
(pocket / no-pocket) and a mask head that takes the inner product with the neck's
residue keys,
$\mathbf{m}_k=\sigma\big(\langle\mathrm{MLP}_{\mathrm{mask}}(\mathbf{q}_k),\,
\mK_{\mathrm{emb}}\rangle\big)\in[0,1]^{L}$, so every proposal independently predicts
its own residue mask. Training matches proposals to ground-truth pockets by Hungarian
matching and sums classification, per-residue mask, and Dice losses, with auxiliary
supervision at every decoder layer (matching weights in Appendix~\ref{app:arch}).
Contrastive denoising (CDN) training \citep{li2022dndetr} appends denoising queries
built from ground-truth pockets---two positive copies (one clean, one lightly noised)
and one heavily noised \emph{contrastive negative} labeled as no-object,
attention-isolated from the real queries---which stabilizes bipartite matching in early
training and teaches the decoder to reject near-miss decoys (noise scales in
Appendix~\ref{app:arch}). The same head and training configuration is shared by all
backbone comparisons of \S\ref{sec:exp}.

\section{Experiments}
\label{sec:exp}

\subsection{Setup}
\label{sec:exp:setup}
\textbf{Tasks and data.} EC number prediction and fold classification follow the GearNet
protocol and splits \citep{zhang2023gearnet}; we report Fmax@50\% and AUPR@95 for EC in
the main text (all five cutoffs in Appendix~\ref{app:ec-full}) and accuracy on the three
fold-classification splits (fold / superfamily / family). Pocket
detection is trained on UniSite-DS \citep{fan2025unisite} and evaluated in-distribution
(AP at IoU 0.3/0.5) and zero-shot on HOLO4K-sc and COACH420 (AP@0.3, DCC, DCA), matching
the UniSite evaluation protocol.

\textbf{Baselines and training.} Classification baselines are GCN, GAT, Point
Transformer, SchNet, GVP, GearNet(-Edge/-IEConv), ProNet(-Backbone), CDConv, and SCHull;
values follow \citet[Tables~2, 6 and~7]{zhang2023gearnet} where available and are our
reproductions under the unified recipe otherwise (provenance and deviations in
Appendix~\ref{app:details}). Pocket baselines are Fpocket, Fpocket-rescore, P2Rank,
DeepPocket, GrASP, VN-EGNN, and UniSite-1D/3D \citep{fan2025unisite}. All models
use AdamW with warmup and cosine decay, EMA, mixed precision, and gradient clipping; full
recipes, hardware, and the latency protocol are in Appendix~\ref{app:details}.

\subsection{Main results}
\label{sec:exp:main}
\paragraph{Function and structure classification.}
\label{sec:exp:cls}
\begin{table}[!h]
\centering
\caption{EC number prediction and fold classification results. EC reports Fmax at the
50\% sequence-identity cutoff and AUPR at 95\% (full table in
Appendix~\ref{app:ec-full}); fold reports test accuracy on three held-out
splits. Rows above/below the rule are generic/protein-specific architectures.
\textbf{Bold}: best; \underline{underlined}: second best (ties included).}
\label{tab:cls}
\small
\renewcommand{\arraystretch}{0.95}
\setlength{\tabcolsep}{4pt}
\begin{tabular}{lccccccc}
\toprule
& & & \multicolumn{2}{c}{EC Prediction} & \multicolumn{3}{c}{Fold Classification} \\
\cmidrule(lr){4-5}\cmidrule(lr){6-8}
Method & Params & Latency & Fmax@50\% & AUPR & Fold & Superfamily & Family \\
\midrule
GCN & 21.9M & 6.2\,ms & 0.246 & 0.319 & 0.168 & 0.213 & 0.828 \\
GAT & 21.9M & 6.9\,ms & 0.247 & 0.339 & 0.124 & 0.165 & 0.727 \\
Point Transformer & 14.5M & 9.7\,ms & 0.276 & 0.346 & 0.181 & 0.211 & 0.689 \\
SchNet & 6.8M & 8.1\,ms & 0.617 & 0.714 & 0.221 & 0.319 & 0.863 \\
\midrule
GVP & 7.1M & 10.0\,ms & 0.347 & 0.467 & 0.274 & 0.459 & 0.938 \\
GearNet & 31.1M & 4.0\,ms & 0.615 & 0.751 & 0.284 & 0.426 & 0.953 \\
GearNet-Edge & 40.7M & 48.6\,ms & 0.694 & 0.835 & 0.440 & 0.667 & 0.991 \\
GearNet-Edge-IEConv & 46.4M & 44.1\,ms & 0.718 & 0.831 & 0.483 & 0.703 & \underline{0.995} \\
ProNet & 1.5M & 9.1\,ms & 0.669 & 0.795 & 0.462 & 0.553 & 0.947 \\
ProNet-Backbone & 4.3M & 11.9\,ms & 0.629 & 0.755 & 0.527 & 0.703 & 0.993 \\
CDConv & 31.0M & 14.2\,ms & 0.777 & 0.859 & \underline{0.568} & 0.765 & \underline{0.995} \\
SCHull & 4.3M & 11.9\,ms & 0.664 & 0.787 & 0.561 & 0.746 & 0.994 \\
\midrule
\method (ours) & 7.7M & 10.5\,ms & \textbf{0.789} & \textbf{0.874} & 0.540 & \underline{0.775} & \textbf{0.996} \\
\methoddeep (ours) & 20.6M & 15.7\,ms & \underline{0.787} & \underline{0.873} & \textbf{0.578} & \textbf{0.794} & \underline{0.995} \\
\bottomrule
\end{tabular}
\end{table}

Table~\ref{tab:cls} compares \method with from-scratch structure encoders. \methoddeep
attains the best accuracy among protein-specific methods on the Fold split---the hardest
generalization setting---surpassing even the IEConv-augmented GearNet variant with a
single convolution type instead of a heterogeneous layer stack, and is also best on the
Superfamily split (Family is near-saturated for all recent methods). On EC, \method
attains the best numbers, with \methoddeep close behind---both ahead of every baseline.
The two variants occupy complementary sweet spots: the flat \method preserves the
residue-level detail that function annotation rewards, while the hierarchical \methoddeep
turns multi-scale pooling into the best global fold representation---at 19--21M
parameters, both remain smaller than the GearNet family and CDConv.

\paragraph{Ligand binding site detection.}
\label{sec:exp:pocket}
\begin{table}[t]
\centering
\caption{Pocket detection: in-distribution (UniSite-DS) and zero-shot generalization
(HOLO4K-sc, COACH420; all methods trained on UniSite-DS only). Left: AP at IoU 0.3/0.5;
right: zero-shot AP@0.3 and DCC/DCA top-$n$ success rates (4\,\AA). \textbf{Bold}:
best; \underline{underlined}: second best.}
\label{tab:pocket}
\small
\renewcommand{\arraystretch}{0.95}
\setlength{\tabcolsep}{2.5pt}
\begin{tabular}{lcc|cccccc}
\toprule
& \multicolumn{2}{c}{UniSite-DS} & \multicolumn{3}{c}{HOLO4K-sc (zero-shot)} & \multicolumn{3}{c}{COACH420 (zero-shot)} \\
\cmidrule(lr){2-3}\cmidrule(lr){4-6}\cmidrule(lr){7-9}
Method & AP@0.3 $\uparrow$ & AP@0.5 $\uparrow$ & AP@0.3 $\uparrow$ & DCC $\uparrow$ & DCA $\uparrow$ & AP@0.3 $\uparrow$ & DCC $\uparrow$ & DCA $\uparrow$ \\
\midrule
Fpocket & 0.184 & 0.102 & 0.271 & 0.308 & 0.438 & 0.211 & 0.271 & 0.411 \\
Fpocket-rescore & 0.508 & 0.235 & 0.590 & 0.518 & 0.765 & 0.560 & 0.441 & 0.711 \\
P2Rank & 0.506 & 0.216 & 0.601 & 0.530 & \underline{0.819} & 0.619 & 0.464 & 0.741 \\
DeepPocket & 0.427 & 0.233 & 0.542 & 0.493 & 0.737 & 0.518 & 0.396 & 0.676 \\
GrASP & 0.447 & 0.285 & 0.667 & 0.513 & 0.742 & 0.715 & 0.485 & \underline{0.762} \\
VN-EGNN & 0.162 & 0.071 & 0.261 & \underline{0.586} & 0.700 & 0.264 & \underline{0.545} & 0.753 \\
UniSite-1D & 0.512 & 0.303 & 0.687 & 0.554 & 0.769 & 0.592 & 0.455 & 0.735 \\
UniSite-3D & \underline{0.560} & \underline{0.384} & \underline{0.709} & 0.572 & 0.788 & \underline{0.720} & 0.470 & 0.738 \\
\midrule
\methodpocket (ours) & \textbf{0.617} & \textbf{0.467} & \textbf{0.735} & \textbf{0.681} & \textbf{0.827} & \textbf{0.768} & \textbf{0.563} & \textbf{0.786} \\
\bottomrule
\end{tabular}
\end{table}

We have three observations. First, \methodpocket sets the best in-distribution result on UniSite-DS
(Table~\ref{tab:pocket}), improving AP@0.3 by 5.7 points over UniSite-3D and AP@0.5 by 8.3
points. Second, the gap widens out of
distribution: \methodpocket exceeds UniSite-3D on all six zero-shot metrics, with the
largest gain on localization quality (DCC: +10.9 on HOLO4K-sc, +9.3 on COACH420). Third,
all of this is achieved with \emph{pure three-dimensional input}: no ESM features, no
evolutionary information.

\subsection{The expressivity ladder is causal: operator-only swaps}
\label{sec:exp:operator}

\begin{table}[!h]
\centering
\caption{Operator-only ablation: all rows share the default \method backbone, task head,
and training recipe; only the aggregation operator varies. Params and latency refer to
the classification configuration. \textbf{Bold}: best;
\underline{underlined}: second best.}
\label{tab:operator}
\renewcommand{\arraystretch}{0.95}
\setlength{\tabcolsep}{3pt}
\begin{tabular}{lcccccc}
\toprule
& & & \multicolumn{2}{c}{EC prediction} & \multicolumn{2}{c}{Pocket detection} \\
\cmidrule(lr){4-5}\cmidrule(lr){6-7}
Operator & Params & Latency & Fmax@50\% & AUPR & AP@0.3 & AP@0.5 \\
\midrule
Additive MP & 3.9M & 9.6\,ms & 0.677 & 0.808 & 0.534 & 0.264 \\
Scalar gate & 3.9M & 11.2\,ms & 0.713 & 0.834 & 0.500 & 0.247 \\
Channel gate & 3.6M & 10.3\,ms & \underline{0.777} & \underline{0.872} & \underline{0.608} & \underline{0.432} \\
Matrix gate (\method) & 6.6M & 10.5\,ms & \textbf{0.789} & \textbf{0.874} & \textbf{0.617} & \textbf{0.467} \\
\bottomrule
\end{tabular}
\end{table}

Table~\ref{tab:operator} makes the expressivity ladder empirical. The two lowest rungs
sit close together, in task-dependent order---a single geometry-conditioned coefficient
per edge, however computed, cannot bind content to geometry. The decisive jump comes
from \emph{per-channel} modulation, which closes most of the gap on EC; yet the full
matrix gate still leads on every metric of both tasks, its margin over channel gating
growing at the stricter IoU threshold ($3.5$ AP@0.5 points)---precisely where
content--geometry binding should matter most.

\begin{figure*}[!h]
\centering
\includegraphics[width=\textwidth]{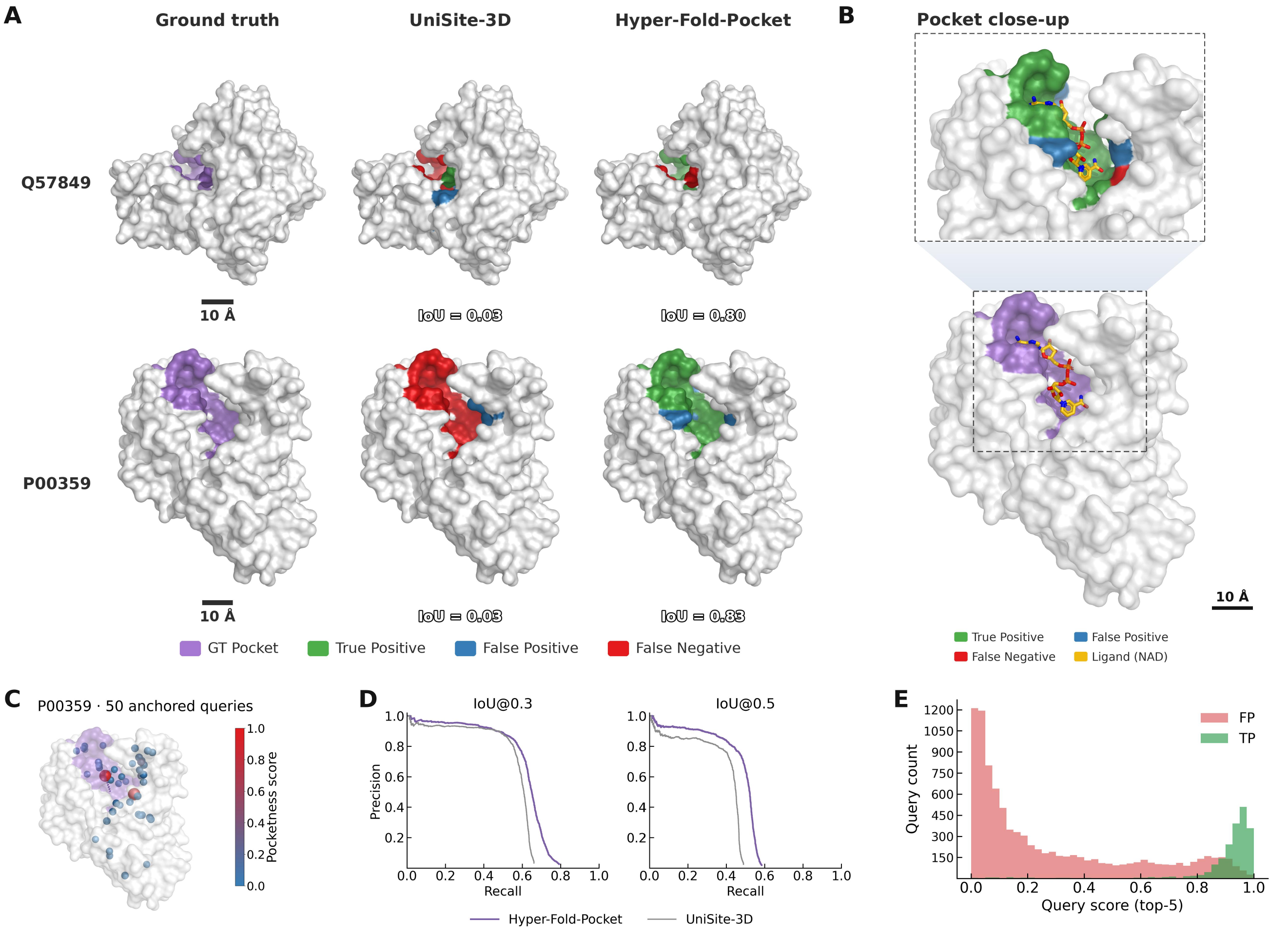}
\caption{Qualitative analysis of ligand-binding-site detection.
\textbf{(A)} Per-residue error maps on Q57849 and P00359 (rows;
columns: ground truth, UniSite-3D, \methodpocket): pocket in purple; TP/FP/FN in
green/blue/red; pocket IoU under each prediction. \textbf{(B)} Pocket anatomy on
P00359: whole protein with ground-truth pocket in purple and the bound NAD ligand; the
close-up uses the same error coloring. \textbf{(C)} Pocketness distribution
of the 50 anchored queries. \textbf{(D)} Precision--recall curves at IoU 0.3/0.5.
\textbf{(E)} Top-5 query scores: high scores are almost exclusively true positives.
Panel protocols in Appendix~\ref{app:pocket-abl}.}
\label{fig:gallery}
\end{figure*}

\subsection{Analysis}
\label{sec:exp:analysis}
\label{sec:exp:efficiency}

Figure~\ref{fig:teaser}(b) plots measured latency against accuracy, marker size
proportional to parameters: \methodpocket (10.07M, 23.9\,ms) is $4.8\times$ faster and
$68\times$ smaller than UniSite-3D (683.9M, 114.3\,ms) at the best accuracy, and the
ESM-2 embedding stage alone (38.1\,ms) costs more than the entire \methodpocket forward
pass---consistent with the theory, as Fold-Conv keeps the
$O(|\mathcal{E}|\,C^2)$ order of additive message passing whereas transformer-based
pipelines pay $O(L^2)$ self-attention over residues
(complexity analysis in Appendix~\ref{app:arch}). The gap is not only asymptotic: the
complete bilinear operator is memory-infeasible at residue resolution, and rank-$K$
factorization is what makes the ceiling usable there. In pocket ablations
(Appendix~\ref{app:pocket-abl}), the
rank-$K$ kernel saturates at $K{=}8$ and contrastive denoising adds $1.5$ AP@0.5.

Figure~\ref{fig:gallery} dissects where the detection gain comes from: UniSite-3D's
errors scatter into false-positive surface patches and missed pocket residues, whereas
\methodpocket's align tightly with the ground truth
(Figure~\ref{fig:gallery}A--B); the anchored queries concentrate their pocketness on
pocket neighborhoods (Figure~\ref{fig:gallery}C); and the top-5 query scores are almost
exclusively true positives (Figure~\ref{fig:gallery}E), mirroring the PR-curve
dominance at both IoU thresholds (Figure~\ref{fig:gallery}D). On multi-pocket proteins,
anchored queries specialize to distinct sites rather than competing for one. The margin
persists under distribution shift: center-localization gains are largest on the zero-shot
benchmarks (DCC $+9$--$11$ over UniSite-3D), suggesting expressive 3D features transfer
better than language-model fusion.

\section{Conclusion}
\label{sec:concl}
The expressive limit of sequence--geometry interaction has a clean answer: the complete
bilinear operator over content--geometry outer products.
Mainstream additive message passing occupies the lowest rung and is provably blind to
content--geometry binding; \method approaches the ceiling at message-passing cost, with
payoff spanning function annotation, fold classification, and zero-shot pocket
detection---without sequence language model features.

\subsection*{AI use statement}

In this work, we used generative AI tools for literature search and retrieval during
background research, for assistance in checking and testing code, and for language
polishing of the manuscript. All AI-assisted outputs were reviewed and verified by the
authors, and all quantitative results were read directly from raw experimental logs. We
take responsibility for the final content of this work, including text, claims, and
artifacts produced with the aid of generative AI.

\subsection*{Ethics statement}

This work studies protein structure modeling on public benchmarks; it involves no human
subjects, sensitive or personally identifiable data, and no foreseeable harmful
applications. The authors have read and adhere to the ICLR Code of Ethics.

\subsection*{Reproducibility statement}

Appendix~\ref{app:arch} specifies all architecture configurations and
Appendix~\ref{app:details} the full training recipes; complete proofs of all
theoretical claims are given in Appendix~\ref{app:proofs}. Anonymized source code,
configuration files, and trained model weights are included in the supplementary
material; the pocket model runs on a single PDB file out of the box.

\bibliography{references}
\bibliographystyle{iclr2027_conference}

\appendix
\section{Extended related work}
\label{app:related}

\paragraph{Geometric deep learning on protein structures.}
Structure-based encoders represent a protein as a residue graph and learn with
SE(3)-invariant or equivariant layers: GVP \citep{jing2021gvp} introduced geometric vector
perceptrons; GearNet \citep{zhang2023gearnet} added edge message passing and remains a
strong multi-task baseline \citep{jamasb2024benchmark}; CDConv \citep{fan2023cdconv} and
GCPNet \citep{morehead2024gcpnet} explore continuous or geometry-complete alternatives.
\S\ref{sec:theory} stratifies these layers by their gating algebra; here we only note the
closest point in expressivity, CDConv, whose matrix-valued kernels match ours in principle
but are memory-intensive at full residue resolution, forcing early downsampling that trades
away exactly the residue-level detail tasks such as pocket detection require. \method keeps
the matrix-gated kernel but factorizes it at rank $K$ over sequence and contact hyperedges,
bringing the cost into the message-passing regime at full sequence length---$4\times$ fewer
parameters and lower latency than CDConv (Table~\ref{tab:cls})---which is precisely what
makes the residue-level \methodpocket head practical.

\paragraph{Sequence models and sequence--structure fusion.}
Protein language models pretrained at evolution scale
\citep{rives2021esm1b,lin2023esm2} dominate sequence-based prediction, and hybrid models
inject structure into them \citep{su2024saprot,hayes2025esm3}. Our results suggest that a
sufficiently expressive 3D backbone can make such fusion unnecessary
(\S\ref{sec:exp:pocket}).

\paragraph{Ligand binding site detection.}
Geometry methods (Fpocket \citep{leguilloux2009fpocket}), ML rescoring
(P2Rank \citep{krivak2018p2rank}), and CNN/GNN detectors
\citep{aggarwal2022deeppocket,smith2024grasp,sestak2024vnegnn} operate on surface or
graph features; UniSite \citep{fan2025unisite} reframed detection as set prediction over
fused sequence--structure features. We adopt the set-prediction formulation but replace
the encoder.

\paragraph{Expressivity of graph networks.}
The WL hierarchy bounds standard MPNNs \citep{xu2019gin}, and invariant tensor networks
characterize maximal universal layers \citep{maron2019invariant}; on hypergraphs, the
expressive power of hypergraph neural networks has been characterized via generalized
Weisfeiler--Lehman tests \citep{feng2026howpowerful}. Our analysis is
orthogonal and finer-grained: within the \emph{geometry-conditioned} layers actually used
on proteins, we stratify expressivity by the algebraic form of the content--geometry
interaction, in the spirit of factorization-machine analyses of feature interaction
\citep{guo2017deepfm}.

\paragraph{Hypergraph modeling.}
Hypergraph neural networks \citep{feng2019hgnn,yadati2019hypergcn} generalize pairwise
edges to hyperedges that connect a vertex to a \emph{set} of related vertices, with
HGNN$^+$ \citep{gao2023hgnnplus} unifying vertex- and hyperedge-level propagation, and
hypergraph isomorphism computation \citep{feng2024hyperiso} characterizing structural
equivalence of such high-order structures. Hypergraph modeling has also proven effective
in detection: Hyper-YOLO \citep{feng2025hyperyolo} constructs hypergraphs over visual
features to capture high-order correlations among candidate regions. Hyperedges are a
natural language for the two residue groupings we use---sequence adjacency and spatial
contact. \method borrows this grouping view only: each residue anchors a sequence
hyperedge and a contact hyperedge, but information propagation is carried out by our own
matrix-gated convolution (\S\ref{sec:method:layer}), not by hypergraph propagation rules.

\section{Architecture details}
\label{app:arch}

\subsection{\method backbone}

The backbone maps the residue-level input (amino-acid embedding, C$_\alpha$ coordinates,
and local frames; \S\ref{sec:method:hyper}) to residue- or protein-level features by
stacking Fold-Conv blocks on the radius graph with its two hyperedge types. Every block
runs the same pipeline: the geometric feature builder computes $\vdelta_{ij}$ and
$\Delta s_{ij}$ for each edge (Eq.~\eqref{eq:edgefeat}), the geometric kernel network
(GK-Net) turns them into rank-$K$
kernel coefficients (Eq.~\eqref{eq:weightnet}), the outer-product message of
Eq.~\eqref{eq:hfconv} aggregates over both hyperedges, and a bottleneck residual unit
closes the block. The two variants differ only in how the blocks are staged
(\S\ref{sec:method:backbone}). The remaining implementation details follow.

\paragraph{Fold-Conv residual block.}
The convolution of Eq.~\eqref{eq:hfconv} is wrapped in a bottleneck pre-activation
residual unit,
$\vh^{(b+1)}=\vh^{(b)}+W_{\mathrm{out}}\,\mathrm{Drop}\big(\mathrm{Conv}_{r_b}
\big(\mathrm{LReLU}(\mathrm{BN}(W_{\mathrm{in}}\vh^{(b)}))\big)\big)$,
with bottleneck width $C/2$ and BN momentum $0.2$, following the GearNet block design.

\paragraph{GK-Net bucketing and smooth gate.}
The bucketed first layer of GK-Net (Eq.~\eqref{eq:weightnet}) uses $\ell$ offset buckets
per stage ($\ell{=}17$ in the shallow stage of \method, $\ell{=}21$ in the deeper stage
and in \methoddeep, which uses $\ell{=}5$ for fold); contact neighbors, whose sequence displacement saturates
the clamp, share the boundary bucket. The smooth edge gate is
$\sigma_{ij}=\tfrac12\big[1-\tanh\big(16\,\bar d\,\bar s-14\big)\big]$ with
$\bar d=\|\mathbf{p}_i-\mathbf{p}_j\|/r$ and $\bar s=|\Delta s_{ij}|/(\ell/2)$, softly
downweighting edges that are simultaneously far in space and far in sequence.

\paragraph{Backbone configurations.}
\method stacks six Fold-Conv blocks in two stages of three blocks each, with
$(\ell, r) = (17, 8\,\text{\AA})$ and $(21, 10\,\text{\AA})$ respectively, at a constant
width of 512 (bottleneck blocks with base width 32). The readout taps the two stage
outputs (blocks 3 and 6): classification mean-pools each level and concatenates the
pooled vectors into $\mathbb{R}^{1024}$, while the pocket neck consumes the two levels
separately (the \methodpocket paragraph below). \methoddeep is a hierarchical variant of \method with a
four-stage widening pyramid of channels $256 \rightarrow 512 \rightarrow 1024 \rightarrow
2048$: each stage contains two Fold-Conv blocks with radius $r = 8, 12, 16,
20\,\text{\AA}$ per stage, followed by residue pooling that progressively shortens the
sequence (19--21M parameters depending on the task head). The sequence window follows
the per-task convention of the reference protocol: $\ell = 21$ for EC and $\ell = 5$
for fold, where we find a wide window overfits (test fold accuracy $0.578 \to 0.493$
at $\ell{=}21$).

\paragraph{Complexity.}
Per Fold-Conv layer, evaluating the rank-$K$ kernel of Eq.~\eqref{eq:hfkernel} costs
$O(|\mathcal{E}|\,K\,C\,C')$ with bottleneck width $C'{=}C/2$ and small constant
$K{=}8$---the same order as additive message passing's $O(|\mathcal{E}|\,C^2)$, up to a
small constant factor. The graph itself is sparse: each residue aggregates from a fixed
window of $\ell$ sequence neighbors plus its radius neighbors, whose count is bounded by
the near-constant packing density of C$_\alpha$ atoms, so
$|\mathcal{E}|=O\big(L(\ell{+}\rho)\big)$ with $\ell,\rho$ independent of $L$ and the
per-layer cost grows \emph{linearly} with sequence length. Transformer-based pipelines
such as UniSite instead pay $O(L^2 d)$ self-attention over all residue pairs on top of
the backbone. The pocket head preserves the linear scaling: with the proposal count
$N_q$ fixed, decoder cross-attention over the $2L$ memory rows and the mask inner
products both cost $O(L\,N_q\,d)$. Measured latency confirms the gap to additive
message passing is a small constant factor, while remaining orders of magnitude cheaper
than protein-language-model-based pipelines (\S\ref{sec:exp:efficiency}).

\subsection{\methodpocket}

\methodpocket turns the residue-level features of \method into a set of pocket
predictions in four stages (\S\ref{sec:method:head}): a neck projects the two backbone
levels into query, key, and memory features; the pocket proposal network (PPN) scores
every residue and selects
the top-$N_q$ structure-anchored proposals; a 4-layer transformer decoder refines them
against the bi-level memory; and two heads produce, per proposal, a pocket score and a
residue mask. Training combines Hungarian-matched classification/mask/Dice losses with
contrastive denoising. The complete design follows.

\paragraph{Neck.}
The residue-level backbone runs at width $512$ and is read out at blocks $\{3,6\}$,
giving $\mX_{\mathrm{low}},\mX_{\mathrm{high}}\in\mathbb{R}^{L\times 512}$; its node
input is a learned $16$-dimensional amino-acid-type embedding. Because the structure
may not cover the full UniProt sequence, backbone features are scattered onto the
UniProt residue axis of length $L$: positions without structural coverage receive a
shared learned null vector, and a binary mask records coverage. Two pairs of linear
projectors then produce the fused residue features and the mask keys,
\begin{equation}
\mX_{\mathrm{res}}=\mX_{\mathrm{low}}\mW_{\mathrm{low}}^{q}+\mX_{\mathrm{high}}\mW_{\mathrm{high}}^{q},
\qquad
\mK_{\mathrm{emb}}=\mX_{\mathrm{low}}\mW_{\mathrm{low}}^{k}+\mX_{\mathrm{high}}\mW_{\mathrm{high}}^{k},
\end{equation}
with $\mW_{\cdot}^{q},\mW_{\cdot}^{k}\in\mathbb{R}^{512\times 256}$, both outputs in
$\mathbb{R}^{L\times 256}$. For the decoder memory, each level's query projection is
tagged with a learnable level embedding
$\mathbf{e}_{\mathrm{low}},\mathbf{e}_{\mathrm{high}}\in\mathbb{R}^{256}$ and the two
levels are concatenated along the residue axis into the bi-level memory
$\mM_{\mathrm{emb}}=[\,\mX_{\mathrm{low}}\mW_{\mathrm{low}}^{q}+\mathbf{e}_{\mathrm{low}};\
\mX_{\mathrm{high}}\mW_{\mathrm{high}}^{q}+\mathbf{e}_{\mathrm{high}}\,]\in\mathbb{R}^{2L\times256}$,
paired with a shared three-dimensional Fourier positional encoding (3D-PE)
$\mM_{\mathrm{pos}}$ (the same residue
coordinates, repeated for both levels) and a padding mask that excludes invalid and
unmapped positions.

\paragraph{PPN.}
A linear probe $s_i=\mathbf{w}^{\top}\mathbf{x}_{\mathrm{res},i}$ with
$\mathbf{w}\in\mathbb{R}^{256}$ scores every residue (\emph{pocketness}), supervised by
a residue-level binary cross-entropy against the union of ground-truth masks (loss
weight $2$). The top-$N_q$ eligible residues ($N_q{=}50$; eligible $=$ structurally
mapped and valid) become proposals: their rows of $\mX_{\mathrm{res}}$ initialize the
query content $\mQ_{\mathrm{emb}}\in\mathbb{R}^{N_q\times256}$, and their C$_\alpha$
coordinates serve as anchors $\{\mathbf{a}_k\}_{k=1}^{N_q}$, embedded by the 3D-PE
below into $\mQ_{\mathrm{pos}}$. Proteins with fewer than $N_q$ eligible residues pad
the remaining queries with a learned null content vector.

\paragraph{3D Fourier positional encoding.}
All coordinates are protein-centered (each structure's C$_\alpha$ centroid is
subtracted), so the encoding is translation-invariant like the backbone. A coordinate
$\mathbf{a}\in\mathbb{R}^{3}$ is embedded per axis with $8$ frequency bands,
$\gamma(\mathbf{a})=[\sin(2^{m}\mathbf{a}),\cos(2^{m}\mathbf{a})]_{m=0}^{7}\in\mathbb{R}^{48}$,
followed by a learned linear projection to $256$ dimensions. This yields the query
positional encoding $\mQ_{\mathrm{pos}}$ (of the anchors) and the memory positional
encoding $\mM_{\mathrm{pos}}$ (of every residue).

\paragraph{Decoder.}
A 4-layer transformer decoder ($8$ attention heads, feed-forward width $1024$, dropout
$0.1$, post-norm with a final LayerNorm) refines $\mQ_{\mathrm{emb}}$ by
cross-attending to the bi-level memory $(\mM_{\mathrm{emb}},\mM_{\mathrm{pos}})$, with
$\mQ_{\mathrm{pos}}$ injected as the query positional encoding at every layer. The
output of each decoder layer is read out by the prediction heads below, providing
auxiliary supervision.

\paragraph{Prediction heads.}
Each refined query $\mathbf{q}_k$ is read out by a two-layer classification MLP
$z_k=\mathrm{MLP}_{\mathrm{cls}}(\mathbf{q}_k)\in\mathbb{R}^{2}$ (pocket / no-pocket)
and a three-layer mask MLP followed by an inner product with the mask keys,
$\mathbf{m}_k=\sigma\big(\langle\mathrm{MLP}_{\mathrm{mask}}(\mathbf{q}_k),\,
\mK_{\mathrm{emb}}\rangle\big)\in[0,1]^{L}$, so every proposal independently predicts
its own residue mask. At inference, masks are binarized at $0.5$, and the pocket score
is the classification probability multiplied by the mean mask probability within the
predicted mask---a calibration factor that downweights diffuse masks.

\paragraph{Matching and losses.}
Hungarian matching between the $N_q{=}50$ proposals and the ground-truth pockets
minimizes
$\mathcal{C}=2\,\mathcal{C}_{\mathrm{cls}}+5\,\mathcal{C}_{\mathrm{bce}}+5\,\mathcal{C}_{\mathrm{dice}}$,
where $\mathcal{C}_{\mathrm{cls}}$ is the negative predicted probability of the
ground-truth class, $\mathcal{C}_{\mathrm{bce}}$ the per-residue binary cross-entropy
between the predicted and ground-truth masks, and $\mathcal{C}_{\mathrm{dice}}$ the
Dice discrepancy. The training loss sums the same three terms over matched pairs
(unmatched queries contribute only the classification term toward the no-pocket class),
normalized by the number of ground-truth pockets, with auxiliary supervision at every
decoder layer; the PPN residue-level loss (weight $2$) is added on top.

\paragraph{Contrastive denoising (CDN).}
Each ground-truth pocket spawns denoising queries whose content is its mask-pooled
residue feature, $\mathbf{c}=\sum_{i}w_i\,\mathbf{x}_{\mathrm{res},i}$ with $w_i$ the
ground-truth mask normalized over mapped residues, and whose anchor is the masked
C$_\alpha$ centroid $\sum_{i}w_i\,\mathbf{a}_i$. Two positive copies---one clean, one
with Gaussian feature noise $0.5$ and coordinate noise $\sigma{=}2$\,\AA---reconstruct
their source pocket, while one \emph{contrastive negative} copy with larger noise
(feature noise $\times2$, $\sigma{=}6$\,\AA) is labeled no-object; the number of DN
queries is capped at $32$. DN queries attend to the same memory but are
attention-isolated from the real queries, skip Hungarian matching (each is aligned 1:1
with its source pocket), and incur mask losses on the positive copies only.

\paragraph{Relation to DETR-style detectors.}
\methodpocket borrows the set-prediction formulation of DETR \citep{carion2020detr} but
redesigns every component for three-dimensional structures. (i) \emph{No transformer
encoder}: DETR-family detectors interpose a transformer encoder between the
convolutional backbone and the decoder; we replace it with a lightweight projection
neck over the two backbone levels, so the decoder cross-attends directly to a
convolutional, geometry-aware memory. (ii) \emph{Structure-anchored queries}: DETR's
queries are freely learned embeddings, and anchor-based variants
\citep{liu2022dabdetr,zhang2022dino,lv2024rtdetr} parameterize or select 2-D anchor
boxes from encoder outputs; our proposals are anchored on actual residues---content
initialized from the top-pocketness residue features, position from their C$_\alpha$
coordinates---fixing both the query content and its 3-D reference point without any
encoder. (iii) \emph{Masks instead of boxes}: each proposal predicts a per-residue mask
by inner product with the residue keys, rather than regressing a bounding box.
(iv) \emph{Contrastive denoising}: DN-DETR and DINO \citep{li2022dndetr,zhang2022dino}
reconstruct noised ground truth as an auxiliary positive-only task; we additionally
label heavily noised copies as \emph{negatives}, teaching the decoder to reject
near-miss decoys around real pockets. The bi-level memory plays the role of the
multi-scale encoder features of Deformable-DETR \citep{zhu2021deformable}, but comes
from two depths of a single convolutional backbone rather than an image pyramid.

\paragraph{Relation to UniSite \citep{fan2025unisite}.}
UniSite-1D/3D instantiate the DETR template almost verbatim for pocket detection: a
6-layer transformer encoder processes fused ESM--GearNet-Edge residue features, and a
6-layer decoder refines $32$ \emph{freely learned} site queries (zero-initialized
content with learned positional embeddings), read out by classification and
inner-product mask heads and trained with Hungarian-matched BCE and Dice losses only.
\methodpocket shares the set-prediction formulation but differs in three ways.
(i) \emph{Proposals are discovered, not learned}: our queries are the top-$N_q$ residues
by pocketness, carrying the selecting residues' own features as content and their
C$_\alpha$ coordinates as 3D-PE anchors, whereas UniSite's queries are data-independent
parameters that must localize pockets through cross-attention alone. (ii) \emph{No
transformer encoder}: UniSite interposes six encoder layers between the convolutional
backbone and the decoder; our decoder acts directly on the bi-level convolutional
memory, in which sequence--geometry interactions are already encoded by the backbone.
(iii) \emph{Contrastive denoising}: UniSite trains with matched losses only, while CDN
adds positive reconstructions and near-miss negatives ($+1.5$ AP@0.5,
Figure~\ref{fig:app-readout-head}). A fourth, orthogonal difference is that \methodpocket is
purely structural: its residue features come from \method without any sequence model,
whereas UniSite-3D leans on ESM features fused with GearNet-Edge.

\section{Experimental setup}
\label{app:details}

\paragraph{Classification baselines: provenance and deviations.}
GAT, Point
Transformer, SchNet, GVP and ProNet are our reproductions under the same recipe (ProNet at
backbone level, the variant its authors found best for function prediction); the GCN
efficiency numbers are measured on a reference build mirroring the GAT configuration, as
\citeauthor{zhang2023gearnet} release no official GCN recipe. Point Transformer uses a
stabilized recipe (lr $5{\times}10^{-5}$, gradient clip $1.0$, no AMP), as the unified
protocol diverges for this model. ProNet-Backbone \citep{wang2023pronet} and SCHull
\citep{wang2025schull} modify the input \emph{graph construction}, an axis orthogonal to
the layer operator we study; their efficiency numbers are measured on the official
SCHull4Science configuration (backbone level, three blocks of width 256)---SCHull adds no
parameters, and its hull construction is a preprocessing step. CDConv is the official
model trained with its published fold-specific recipe (SGD, 400 epochs, geometry
parameter $l{=}11$); the EC numbers of CDConv and \method are from our unified-protocol runs, while
their fold numbers use the fold-specific recipes.

\paragraph{EC number prediction.}
All EC models are trained with AdamW (learning rate $10^{-4}$, no weight decay), using a
10-epoch linear warmup followed by cosine decay, a batch size of 16, and 300 epochs in
total; we apply EMA with decay $0.999$, mixed precision, and gradient clipping at norm
$10$.

\paragraph{Fold classification.}
The same AdamW recipe and 300-epoch schedule as EC. For \method and \methoddeep we
additionally use an SGD recipe with EMA, label smoothing, and structure-forcing
augmentation (coordinate noise $\sigma{=}0.15$\,\AA, residue-type masking $p{=}0.15$);
\methoddeep uses the widening channel pyramid described in Appendix~\ref{app:arch},
with the narrower sequence window $\ell{=}5$.

\paragraph{Pocket detection.}
\methodpocket and all re-trained pocket baselines are optimized with AdamW (learning
rate $2{\times}10^{-4}$, weight decay $0.05$ applied only to parameters with at least
two dimensions), using a 3-epoch warmup followed by cosine decay, a batch size of 16,
and 40 epochs in total; we apply EMA with decay $0.999$, clip gradients at norm $0.5$,
and add Gaussian coordinate noise ($\sigma{=}0.2$\,\AA) to the C$\alpha$ positions
during training.

\paragraph{Latency measurement.}
All models are trained and benchmarked on a single A100-80GB; latency is measured at batch
size 1 on a synthetic length-300 protein (10 warmup and 50 timed forward passes, fp32),
excluding PDB parsing and ESM tokenization. Graph construction follows each model's native
pipeline: it is part of the timed forward pass for models that build graphs on-device
(\method, UniSite, ProNet, CDConv), and is performed once outside the timed region for
models whose framework treats the graph as a preprocessing artifact (torchdrug-based GCN,
GAT and GVP). For the pocket baselines of Table~\ref{tab:pocket}: UniSite-1D / UniSite-3D
are 662.9M / 683.9M parameters at 46.4 / 114.3\,ms (including the ESM-2 embedding stage,
which alone costs 38.1\,ms); DeepPocket is timed as its two-stage pipeline (a 25.9M
segmentation U-Net at 5.8\,ms plus a 0.7M candidate-ranking classifier at 0.6\,ms per
candidate center); VN-EGNN consumes precomputed ESM features that are not counted in its
parameter or latency figures; Fpocket, Fpocket-rescore and P2Rank are geometry-based CPU
tools with no neural parameters and are not timed on GPU.

\section{More evaluation on the \method backbone}
\subsection{Full EC results}
\label{app:ec-full}

Table~\ref{tab:ec-full} reports the full EC benchmark behind the two summary columns of
Table~\ref{tab:cls}: test Fmax at five sequence-identity cutoffs, together with
micro-AUPR at the 95\% cutoff. \method is the best method at every cutoff, and its
advantage over the strongest baseline widens as the cutoff tightens, from $+1.1$ Fmax
points at 95\% ($0.864$ vs.\ $0.853$) to $+2.1$ points at 30\% ($0.726$ vs.\ $0.705$)---the
gain concentrates on the hard, remote-homology cases where geometry must substitute for
sequence similarity. \methoddeep trails \method slightly but stays ahead of every
baseline at all five cutoffs.

\begin{table}[h]
\centering
\caption{EC number prediction: test Fmax at five sequence-identity cutoffs
(30/40/50/70/95\%) and micro-AUPR at the 95\% cutoff. Method order, provenance, and the
Params/Latency measurements are as in Table~\ref{tab:cls}. \textbf{Bold}: best overall;
\underline{underlined}: second best overall.}
\label{tab:ec-full}
\small
\renewcommand{\arraystretch}{1.22}
\begin{tabular*}{\textwidth}{@{\extracolsep{\fill}}lcccccccc}
\toprule
 & & & \multicolumn{5}{c}{Fmax at sequence-identity cutoff} & AUPR \\
\cmidrule(lr){4-8}
Method & Params & Latency & 30\% & 40\% & 50\% & 70\% & 95\% & 95\% \\
\midrule
\rowcolor{gray!12}\multicolumn{9}{l}{\textit{Generic architectures}} \\
GCN & 21.9M & 6.2\,ms & 0.245 & 0.246 & 0.246 & 0.280 & 0.320 & 0.319 \\
GAT & 21.9M & 6.9\,ms & 0.246 & 0.248 & 0.246 & 0.276 & 0.329 & 0.339 \\
Point Transformer & 14.5M & 9.7\,ms & 0.247 & 0.260 & 0.276 & 0.320 & 0.377 & 0.346 \\
SchNet & 6.8M & 8.1\,ms & 0.555 & 0.573 & 0.617 & 0.686 & 0.742 & 0.714 \\
\midrule
\rowcolor{gray!12}\multicolumn{9}{l}{\textit{Protein-specific architectures}} \\
GVP & 7.1M & 10.0\,ms & 0.347 & 0.338 & 0.347 & 0.400 & 0.480 & 0.467 \\
GearNet & 31.1M & 4.0\,ms & 0.557 & 0.570 & 0.615 & 0.693 & 0.730 & 0.751 \\
GearNet-Edge & 40.7M & 48.6\,ms & 0.625 & 0.646 & 0.694 & 0.757 & 0.810 & 0.835 \\
GearNet-Edge-IEConv & 46.4M & 44.1\,ms & 0.644 & 0.674 & 0.718 & 0.774 & 0.813 & 0.831 \\
ProNet & 1.5M & 9.1\,ms & 0.582 & 0.621 & 0.669 & 0.726 & 0.765 & 0.795 \\
ProNet-Backbone & 4.3M & 11.9\,ms & 0.537 & 0.577 & 0.629 & 0.694 & 0.737 & 0.755 \\
CDConv & 31.0M & 14.2\,ms & 0.705 & 0.737 & 0.777 & 0.823 & 0.853 & 0.859 \\
SCHull & 4.3M & 11.9\,ms & 0.562 & 0.608 & 0.664 & 0.729 & 0.771 & 0.787 \\
\midrule
\rowcolor{gray!12}\multicolumn{9}{l}{\textit{Ours}} \\
\method & 7.7M & 10.5\,ms & \textbf{0.726} & \textbf{0.752} & \textbf{0.789} & \textbf{0.832} & \textbf{0.864} & \textbf{0.874} \\
\methoddeep & 20.6M & 15.7\,ms & \underline{0.716} & \underline{0.746} & \underline{0.787} & \underline{0.830} & \underline{0.861} & \underline{0.873} \\
\bottomrule
\end{tabular*}
\end{table}

\subsection{Architecture and geometry ablations}
\label{app:ec-arch}

Figure~\ref{fig:app-ec-heatmap} ablates the backbone's geometry schedule, sequence
window, and width on the same benchmark (configurations in
Table~\ref{tab:app-ec-arch}). All runs use the identical 300-epoch recipe
(AdamW $10^{-4}$, cosine schedule, EMA 0.999) as the EC main table; numbers are test
Fmax in percent at five sequence-identity cutoffs.

\begin{table}[h]
\centering
\caption{Configurations of the EC ablation variants visualized in
Figure~\ref{fig:app-ec-heatmap}: the per-layer radius schedule $r$ (in \AA) and the
sequence window $\ell$. All rows share the six-layer \method recipe (6.67M parameters,
bottleneck $C/2$) and the unified 300-epoch protocol; only the stated schedules vary.
Cell shading encodes magnitude within each schedule (darker = larger).}
\label{tab:app-ec-arch}
\small
\renewcommand{\arraystretch}{1.22}
\setlength{\tabcolsep}{0pt}
\definecolor{rcol}{RGB}{70,130,180}
\definecolor{lcol}{RGB}{0,150,136}
\newcolumntype{C}{>{\centering\arraybackslash}p{0.055\textwidth}}
\begin{tabular}{ll*{12}{C}}
\toprule
 & & \multicolumn{6}{c}{$r$ (\AA)} & \multicolumn{6}{c}{$\ell$} \\
\cmidrule(lr){3-8}\cmidrule(lr){9-14}
Group & Variant & L1 & L2 & L3 & L4 & L5 & L6 & L1 & L2 & L3 & L4 & L5 & L6 \\
\midrule
\multirow{3}{*}{\textit{Radii}}
 & Uniform Radii & \cellcolor{rcol!30}10 & \cellcolor{rcol!30}10 & \cellcolor{rcol!30}10 & \cellcolor{rcol!30}10 & \cellcolor{rcol!30}10 & \cellcolor{rcol!30}10 & \cellcolor{lcol!30}21 & \cellcolor{lcol!30}21 & \cellcolor{lcol!30}21 & \cellcolor{lcol!30}21 & \cellcolor{lcol!30}21 & \cellcolor{lcol!30}21 \\
 & Expanding Radii ($8{\to}12$\,\AA) & \cellcolor{rcol!15}8 & \cellcolor{rcol!15}8 & \cellcolor{rcol!30}10 & \cellcolor{rcol!30}10 & \cellcolor{rcol!45}12 & \cellcolor{rcol!45}12 & \cellcolor{lcol!15}17 & \cellcolor{lcol!15}17 & \cellcolor{lcol!15}17 & \cellcolor{lcol!30}21 & \cellcolor{lcol!30}21 & \cellcolor{lcol!30}21 \\
 & Contracted Radii ($6{\to}12$\,\AA) & \cellcolor{rcol!0}6 & \cellcolor{rcol!0}6 & \cellcolor{rcol!15}8 & \cellcolor{rcol!15}8 & \cellcolor{rcol!30}10 & \cellcolor{rcol!45}12 & \cellcolor{lcol!0}13 & \cellcolor{lcol!0}13 & \cellcolor{lcol!15}17 & \cellcolor{lcol!15}17 & \cellcolor{lcol!30}21 & \cellcolor{lcol!30}21 \\
\midrule
\multirow{2}{*}{\textit{Window}}
 & Uniform Window & \cellcolor{rcol!15}8 & \cellcolor{rcol!15}8 & \cellcolor{rcol!15}8 & \cellcolor{rcol!30}10 & \cellcolor{rcol!30}10 & \cellcolor{rcol!30}10 & \cellcolor{lcol!30}21 & \cellcolor{lcol!30}21 & \cellcolor{lcol!30}21 & \cellcolor{lcol!30}21 & \cellcolor{lcol!30}21 & \cellcolor{lcol!30}21 \\
 & Contracted + Long Window & \cellcolor{rcol!0}6 & \cellcolor{rcol!0}6 & \cellcolor{rcol!15}8 & \cellcolor{rcol!30}10 & \cellcolor{rcol!45}12 & \cellcolor{rcol!45}12 & \cellcolor{lcol!0}13 & \cellcolor{lcol!0}13 & \cellcolor{lcol!15}17 & \cellcolor{lcol!15}17 & \cellcolor{lcol!30}21 & \cellcolor{lcol!45}25 \\
\midrule
\textit{Width} & Quarter Width ($C/4$) & \cellcolor{rcol!15}8 & \cellcolor{rcol!15}8 & \cellcolor{rcol!15}8 & \cellcolor{rcol!30}10 & \cellcolor{rcol!30}10 & \cellcolor{rcol!30}10 & \cellcolor{lcol!15}17 & \cellcolor{lcol!15}17 & \cellcolor{lcol!15}17 & \cellcolor{lcol!30}21 & \cellcolor{lcol!30}21 & \cellcolor{lcol!30}21 \\
\midrule
\textit{Default} & \method & \cellcolor{rcol!15}8 & \cellcolor{rcol!15}8 & \cellcolor{rcol!15}8 & \cellcolor{rcol!30}10 & \cellcolor{rcol!30}10 & \cellcolor{rcol!30}10 & \cellcolor{lcol!15}17 & \cellcolor{lcol!15}17 & \cellcolor{lcol!15}17 & \cellcolor{lcol!30}21 & \cellcolor{lcol!30}21 & \cellcolor{lcol!30}21 \\
\bottomrule
\end{tabular}
\end{table}

\begin{figure}[t]
\centering
\includegraphics[width=0.9\textwidth]{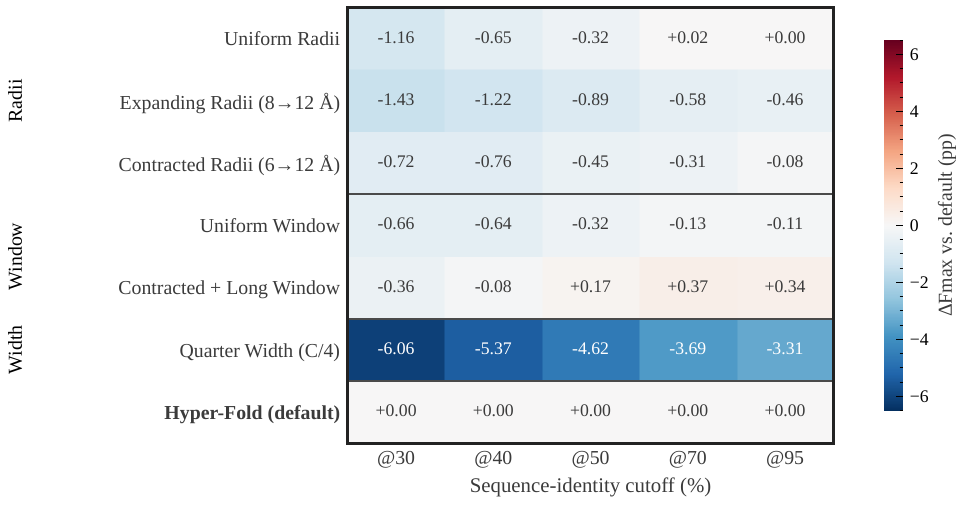}
\caption{EC architecture and geometry ablations as a $\Delta$Fmax heatmap:
each cell is the variant's test Fmax minus the default \method configuration's at the
same sequence-identity cutoff (configurations in Table~\ref{tab:app-ec-arch}). Rows are
grouped by ablated axis; only the quarter-width variant departs markedly from the
default.}
\label{fig:app-ec-heatmap}
\end{figure}

Three findings emerge from Figure~\ref{fig:app-ec-heatmap}. First, the shallow-layer
geometry matters more than the deep-layer one: shrinking the shallow radii from 10\,\AA\
to 8\,\AA\ is free (Uniform Radii vs.\ default: $78.55$ vs.\ $78.87$ Fmax@50), and
reducing them further to 6\,\AA\ is harmful (Contracted Radii)---8\,\AA\ is the
shallow-layer sweet spot, consistent with the local chemistry of residue neighborhoods.
Second, width is the binding constraint: a quarter-width model collapses by $4.6$ Fmax@50
points ($74.25$ vs.\ $78.87$), while the bottleneck $C/2$ design matches full width, so
expressivity should be bought with the rank-$K$ kernel rather than raw channels. Third,
the window and radius schedules interact: combining contracted shallow radii with a
longer deep-layer window (Contracted + Long Window) trades strict-cutoff accuracy for
lenient-cutoff gains, whereas the default schedule is the strongest at the strict
30/40\% cutoffs---the remote-homology regime where geometry must substitute for
sequence similarity.

\section{More evaluation on \methodpocket}
\label{app:pocket-abl}

\subsection{Pocket ablations}

\begin{figure}[t]
\centering
\includegraphics[width=\textwidth]{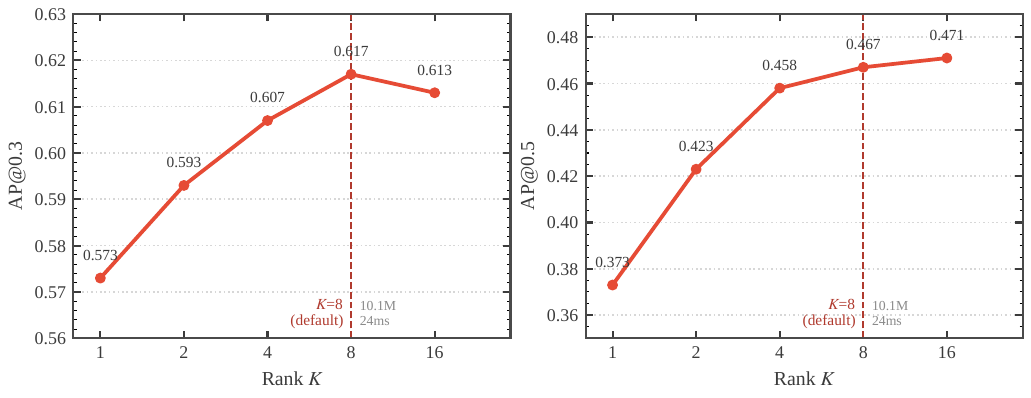}
\caption{Rank-$K$ saturation of \methodpocket: AP@0.3 (left) and AP@0.5 (right) of the
rank sweep, plotted separately because the two metrics
live on different scales. Accuracy saturates at $K{=}8$; parameters and latency are
annotated at the working point. The operator spectrum that explains the saturation is
in Figure~\ref{fig:app-spectrum}.}
\label{fig:app-rankk-sweep}
\end{figure}

\begin{figure}[t]
\centering
\includegraphics[width=\textwidth]{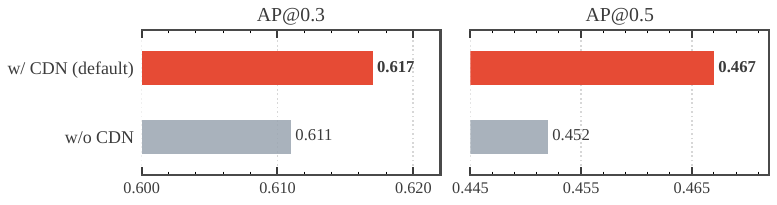}
\caption{Head-component ablation, split by metric (left:
AP@0.3, right: AP@0.5). Coral bars mark the default configuration: removing CDN costs
$0.6$/$1.5$ points (AP@0.3/AP@0.5) at identical latency.}
\label{fig:app-readout-head}
\end{figure}

\begin{figure}[t]
\centering
\includegraphics[width=\textwidth]{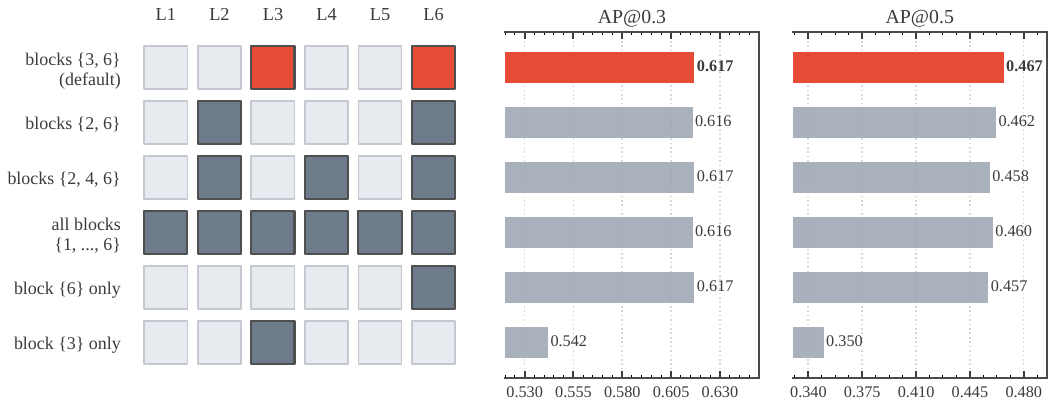}
\caption{Readout selection of \methodpocket. Left: which block outputs feed the
detection neck (coral: selected). Right: AP@0.3 and AP@0.5 of the readout ablation.
Multi-scale readouts tie within noise on AP@0.3; the single-scale $\{3\}$-only
readout collapses and $\{6\}$-only trails on AP@0.5, so we keep the cheapest
two-scale configuration $\{3,6\}$.}
\label{fig:app-readout}
\end{figure}

Three ablation axes are isolated under the identical head and training recipe: the
gating rank $K$ (Figure~\ref{fig:app-rankk-sweep}), the readout
(Figure~\ref{fig:app-readout}), and the head components
(Figure~\ref{fig:app-readout-head}). The aggregation-operator ladder under the same
shared backbone is in Table~\ref{tab:operator}.

\paragraph{Rank $K$.}
AP rises monotonically from $K{=}1$ to $K{=}8$ (${+}4.4$ AP@0.3, ${+}9.4$ AP@0.5) and
saturates: $K{=}16$ adds nothing on AP@0.3 and $0.4$ points on AP@0.5 despite $31\%$
more parameters and $25\%$ higher latency. The saturation point coincides with the
operator spectrum measurement (Appendix~\ref{app:rankk}), where the learned kernel is
dominated by its leading components, so additional ranks fit noise. Note that the
strict-IoU column gains twice as much as the lenient one (${+}9.4$ vs.\ ${+}4.4$):
richer gating buys precise pocket delineation, not coarse localization.

\paragraph{Readout.}
The two-scale $\{3,6\}$ readout is robust (Figure~\ref{fig:app-readout}): every
multi-scale tap---$\{2,6\}$ ($0.616$/$0.462$), $\{2,4,6\}$ ($0.617$/$0.458$), and
all six blocks ($0.616$/$0.460$)---is within noise of the default on AP@0.3, while
the default keeps the best AP@0.5 at the lowest cost. Single-scale readouts expose
the division of labor: $\{6\}$-only matches AP@0.3 ($0.617$) but drops $1.0$ point
on AP@0.5, whereas $\{3\}$-only collapses to $0.542$/$0.350$---deep features suffice
for coarse localization, but precise delineation needs the shallow tap.

\paragraph{Contrastive denoising.}
Removing CDN costs $0.6$ AP@0.3 and $1.5$ AP@0.5 at identical latency. That the
stricter threshold again benefits more matches CDN's role: the contrastive negatives
teach the decoder to reject near-miss decoys, which is exactly what a high-IoU
evaluation penalizes.

Across all three groups, design choices move AP@0.5 about twice as much as AP@0.3---the
strict-IoU column is where content--geometry binding pays off.

\subsection{Qualitative analysis protocols (Figure~\ref{fig:gallery})}
\label{app:gallery}

\paragraph{Predictions and matching.}
All panels are rendered from a single dump of the released \methodpocket checkpoint
(UniSite-DS test split, 2{,}293 proteins; sequences longer than 800 residues excluded at
inference). Predicted masks are binarized at a mask-logit sigmoid of $0.5$; the query
score is the product of its classification probability and the mean mask probability
over the predicted region. Matching follows the AP evaluation exactly: per protein,
queries sorted by descending score greedily claim their best-IoU ground-truth pocket,
and a query is a true positive iff its best IoU reaches the threshold and that pocket is
not already covered; each ground-truth pocket is claimable at most once. IoU is computed
on residue sets (boolean intersection over union of the index sets). The UniSite-3D
comparison uses its official per-protein predictions on the same split, scored by the
identical matching code and ground truth.

\paragraph{(A) Error maps.}
Two in-distribution test proteins (Q57849, P00359), three columns (ground truth,
UniSite-3D, \methodpocket). For \methodpocket we show, among the top-5 queries, the
prediction with the highest IoU against the union of ground-truth pocket masks; for
UniSite-3D we show its highest-scoring prediction. Residues are colored by set
operations on this pair of masks: TP $=$ pred\,$\cap$\,GT (green), FP $=$
pred\,$\setminus$\,GT (blue), FN $=$ GT\,$\setminus$\,pred (red); the ground-truth
column shows the pocket in purple. The annotation under each prediction is the
residue-set IoU of the shown mask against the ground-truth union. All views are
ray-traced molecular surfaces rendered with PyMOL, oriented so that the pocket opening
faces the camera (viewing direction from the protein C$_\alpha$ centroid to the pocket
C$_\alpha$ centroid), with a 10\,\AA\ scale bar calibrated by two pseudoatoms.

\paragraph{(B) Pocket anatomy.}
The P00359 pocket with its bound NAD ligand (sticks; C yellow, O red, N blue, P orange),
the whole protein as a white semi-transparent surface with the ground-truth pocket
tinted purple. The close-up magnifies the dashed region---anchored on the ligand---with
the same TP/FP/FN error coloring as (A).

\paragraph{(C) Query constellation.}
All 50 decoder queries of a single forward pass on P00359, rendered as spheres on the
molecular surface. Each sphere sits at the C$_\alpha$ centroid of the query's predicted
mask (queries with an empty mask are not drawn); the radius is $1.2$\,\AA, enlarged to
$2.0$\,\AA\ when the query score exceeds $0.5$. Sphere color encodes the PPN pocketness
score on a linear grey-blue$\,\to\,$vermillion scale. The purple patch is the top-1
query's predicted mask.

\paragraph{(D) PR curves.}
Precision--recall curves are \emph{pooled}: the (score, TP/FP) records of all queries of
all test proteins are concatenated, sorted globally by descending score, and
accumulated; recall is relative to the total number of ground-truth pockets in the
split. AP is the area under the precision envelope (trapezoidal). Curves use all 50
queries per protein at IoU $0.3$ and $0.5$; their areas are the AP numbers of
Table~\ref{tab:pocket}. The UniSite-3D curve is produced by the same pooling pipeline
applied to its official predictions.

\paragraph{(E) Score histogram.}
Distribution of the top-5 query scores per protein over the same split (40 equal-width
bins on $[0,1]$), split into true and false positives at IoU $0.5$ by the matching rule
above. Note the different query budget from (D): the histogram reflects the
top-5 operating point, whereas the PR curves integrate over all 50 queries.

\section{Proofs}
\label{app:proofs}

\subsection{Proof of Proposition~\ref{prop:containment} (containment)}
\label{app:proof-containment}

\begin{proof}
\emph{Channel-gated $\subset$ matrix-gated.}
Take the diagonal matrix-valued kernel $\mW(\vdelta)=\mathrm{diag}(\mathbf{w}(\vdelta))$;
then $\mW(\vdelta)\vx=\mathbf{w}(\vdelta)\odot\vx$ is exactly class~\eqref{eq:channel}.

\emph{Scalar-gated $\subset$ channel-gated.}
Absorb the fixed linear map $\mV$ into the content features, $\tilde\vx_j=\mV\vx_j$, and
take the channel gate $\mathbf{w}(\vdelta)=\alpha(\vdelta)\,\mathbf{1}$ with all entries
tied. Then $\mathbf{w}(\vdelta)\odot\tilde\vx_j=\alpha(\vdelta)\,\mV\vx_j$ is exactly
class~\eqref{eq:scalar}.

\emph{Additive MP $\subset$ matrix-gated.}
Augment the content with a constant channel, $\tilde\vx=[\vx;1]\in\mathbb{R}^{C+1}$, and
define the geometry-affine kernel
$\mW(\vdelta)=[\,\mW\;|\;\mU\vdelta\,]=\mB_0+\sum_{m=1}^{d_g}\delta_m\,\mB_m$ with
$\mB_0=[\mW\;|\;0]$ and $\mB_m=[\,0\;|\;\mU\mathbf{e}_m\,]$. Then
$\mW(\vdelta_{ij})\tilde\vx_j=\mW\vx_j+\mU\vdelta_{ij}$ is exactly
class~\eqref{eq:additive}: additive MP is the matrix-gated layer whose kernel is affine
in $\vdelta$ (first-order in geometry, no learned geometry-dependent modulation beyond
the linear term).

\emph{Matrix-gated $\subset$ complete bilinear.}
Fix a basis $\{\mB_c\}$ of the matrix space containing the kernel's range and write
$\mW(\vdelta)=\sum_c g_c(\vdelta)\,\mB_c$ (any continuous kernel admits such an expansion;
for a universal $\phi$ whose coordinates include the coefficient functions
$g_1,\dots,g_K$, e.g.\ a Schauder basis of the continuous functions on the compact edge
domain, every continuous kernel is represented exactly). Define the linear map
$\mathcal{T}$ on $\mathbb{R}^{C}\otimes\mathbb{R}^{d_\phi}$ by
$\mathcal{T}(\vx\otimes\mathbf{e}_c)=\mB_c\vx$ on the basis tensors, extended linearly.
Then
\begin{equation*}
\mathcal{T}\big(\vx\otimes\phi(\vdelta)\big)
=\sum_c g_c(\vdelta)\,\mathcal{T}(\vx\otimes\mathbf{e}_c)
=\sum_c g_c(\vdelta)\,\mB_c\vx=\mW(\vdelta)\,\vx ,
\end{equation*}
which is class~\eqref{eq:matrix}.
\end{proof}

\subsection{Proof of Proposition~\ref{prop:binding} (binding blindness)}
\label{app:proof-binding}

\begin{proof}
For an additive layer \eqref{eq:additive}, both configurations give
\begin{equation*}
\vh_i^{A}=\mW\vx_1+\mW\vx_2+\mU\vdelta_1+\mU\vdelta_2=\vh_i^{B},
\end{equation*}
since the content sum and the geometry sum factorize and addition is commutative; no
parameter choice can make $\vh_i^{A}\neq\vh_i^{B}$.

For a matrix-gated layer, the output difference is
\begin{equation*}
\vh_i^{A}-\vh_i^{B}
=\mW(\vdelta_1)\vx_1+\mW(\vdelta_2)\vx_2-\mW(\vdelta_2)\vx_1-\mW(\vdelta_1)\vx_2
=\big(\mW(\vdelta_1)-\mW(\vdelta_2)\big)(\vx_1-\vx_2),
\end{equation*}
which is nonzero whenever
$\mW(\vdelta_1)(\vx_1-\vx_2)\neq\mW(\vdelta_2)(\vx_1-\vx_2)$. Such kernels exist: since
$\vdelta_1\neq\vdelta_2$, a continuous matrix-valued kernel may take $\mW(\vdelta_1)=I$
and $\mW(\vdelta_2)=0$ (and universal weight networks approximate it), giving
$\vh_i^{A}=\vx_1\neq\vx_2=\vh_i^{B}$.
\end{proof}

\subsection{Proof of Theorem~\ref{thm:ceiling} (the bilinear ceiling)}
\label{app:proof-ceiling}

\begin{proof}
Let the message $m(\vx_j,\vdelta_{ij})$ be second-order, i.e.\ bilinear in
$(\vx_j,\phi(\vdelta_{ij}))$. By the universal property of the tensor product, every
bilinear map on $\mathbb{R}^{C}\times\mathbb{R}^{d_\phi}$ factors uniquely through
$\vx_j\otimes\phi(\vdelta_{ij})$: there exists a linear $\mathcal{T}$ with
$m(\vx_j,\vdelta_{ij})=\mathcal{T}\big(\vx_j\otimes\phi(\vdelta_{ij})\big)$. Hence the
message depends on $(\vx_j,\vdelta_{ij})$ only through their outer product, which is
therefore a sufficient statistic of all second-order interaction. Conversely, every
linear $\mathcal{T}$ on $\mathbb{R}^{C}\otimes\mathbb{R}^{d_\phi}$ defines a message
$\mathcal{T}(\vx_j\otimes\phi(\vdelta_{ij}))$ that is bilinear in
$(\vx_j,\phi(\vdelta_{ij}))$, so class~\eqref{eq:bilinear} realizes \emph{all} such
messages. Neighborhood summation and layer stacking preserve the class by linearity, so
no composition of such layers exceeds this ceiling.
\end{proof}

\subsection{Exactness of the outer-product factorization}
\label{app:outerprod}

\begin{lemma}[Outer product realizes matrix gating]
\label{lem:outerprod}
Let $P_c:\mathbb{R}^{K\times C}\to\mathbb{R}^{C'}$ be linear, identified with the block
matrix $P_c=[\,P_c^{(1)}\,|\,\cdots\,|\,P_c^{(K)}\,]$,
$P_c^{(k)}\in\mathbb{R}^{C'\times C}$, acting on the row-major
flattening of $\mathbb{R}^{K\times C}$. Then for every $\mathbf{g}\in\mathbb{R}^{K}$ and
$\vh\in\mathbb{R}^{C}$,
\begin{equation}
\label{eq:outerprod-exact}
P_c\big(\mathbf{g}\otimes \vh\big)=\Big(\sum_{k=1}^{K} g_k\,P_c^{(k)}\Big)\vh .
\end{equation}
Consequently, with $\vh_j=\mW_v\vx_j$ and $\mB_k=P_c^{(k)}\mW_v$, the layer of
Eq.~\eqref{eq:hfconv} coincides exactly with the matrix-gated form of
Eq.~\eqref{eq:hfkernel} with kernel
$\mW(\vdelta,\Delta s)=\sum_{k=1}^{K}\tilde g_k(\vdelta,\Delta s)\,\mB_k$,
$\tilde g_k=\sigma\,g_k$.
\end{lemma}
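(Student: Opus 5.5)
The plan is a direct coordinate computation followed by substitution into Eq.~\eqref{eq:hfconv}. First, fix the identification. Under row-major flattening, the outer product $\mathbf{g}\otimes\vh\in\mathbb{R}^{K\times C}$ has entries $(\mathbf{g}\otimes\vh)_{k,c}=g_k h_c$. Its flattening is therefore the stacked vector $[\,g_1\vh;\,g_2\vh;\,\dots;\,g_K\vh\,]\in\mathbb{R}^{KC}$. The row-major convention makes the $k$-th length-$C$ block equal to $g_k\vh$, and this is exactly what aligns the blocks with the column blocks $P_c^{(k)}$. I would state this explicitly, since it is the only place the convention matters.

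Second, multiply the block row $[\,P_c^{(1)}\,|\,\cdots\,|\,P_c^{(K)}\,]$ against this block column. This gives
\[
P_c(\mathbf{g}\otimes\vh)=\sum_{k=1}^{K}P_c^{(k)}(g_k\vh)=\Big(\sum_{k=1}^{K}g_kP_c^{(k)}\Big)\vh,
\]
using only that each $P_c^{(k)}$ is linear so the scalar $g_k$ can be pulled out. This is Eq.~\eqref{eq:outerprod-exact}.

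Third, apply this to the layer. In Eq.~\eqref{eq:hfconv} the argument of $P_c$ is $\sigma_{ij}\,\mathbf{g}(\vdelta_{ij},\Delta s_{ij})\otimes\mW_v\vx_j$. The outer product is bilinear, so the scalar gate can be absorbed into the coefficient vector: this argument equals $\tilde{\mathbf{g}}\otimes\vh_j$ with $\tilde{\mathbf{g}}=\sigma_{ij}\mathbf{g}$ and $\vh_j=\mW_v\vx_j$. Applying the identity termwise gives
\[
P_c(\cdot)=\sum_k\tilde g_k\,P_c^{(k)}\mW_v\vx_j=\sum_k\tilde g_k\,\mB_k\vx_j.
\]
Summing over $j\in\mathcal{N}(i)$ then reproduces Eq.~\eqref{eq:hfkernel} exactly, with kernel $\mW(\vdelta,\Delta s)=\sum_k\tilde g_k\mB_k$.

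There is no real obstacle, but one point needs care: the lemma assumes $P_c$ is linear. If the implemented ``pointwise readout'' includes a bias or nonlinearity, the identity does not hold as stated. I would therefore note that $P_c$ is taken to be the linear part, and that a bias can be absorbed via the constant-channel augmentation used in Proposition~\ref{prop:containment}.
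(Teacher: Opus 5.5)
Your proposal is correct and follows the paper's proof essentially step for step. Like the paper, you identify the row-major flattening of $\mathbf{g}\otimes\vh$ with the stacked blocks $g_k\vh$, multiply block-wise and pull out each scalar $g_k$ by linearity, then apply the identity edge-wise with $\tilde g_k=\sigma_{ij}g_k$, $\vh_j=\mW_v\vx_j$, $\mB_k=P_c^{(k)}\mW_v$ and sum over $j\in\mathcal{N}(i)$. Your closing caveat about a readout bias lies outside the lemma, which assumes $P_c$ is linear, so it does not affect the argument; note also that absorbing such a bias via a constant channel would add an ungated extra term beyond the $K$ terms $\tilde g_k\mB_k$.
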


\begin{proof}
The row-major flattening of $\mathbf{g}\otimes\vh$ is the concatenation of the blocks
$g_k\vh\in\mathbb{R}^{C}$, $k=1,\dots,K$. Block-wise multiplication gives
$P_c(\mathbf{g}\otimes\vh)=\sum_{k}P_c^{(k)}(g_k\vh)=\sum_{k}g_k(P_c^{(k)}\vh)
=\big(\sum_{k}g_kP_c^{(k)}\big)\vh$,
where the second equality is linearity of $P_c^{(k)}$ in the face of the scalar $g_k$.
Applying this edge-wise with $g_k=\sigma_{ij}\,g_k(\vdelta_{ij},\Delta s_{ij})$,
$\vh_j=\mW_v\vx_j$, and summing over $j\in\mathcal{N}(i)$ yields Eq.~\eqref{eq:hfkernel}
with $\mB_k=P_c^{(k)}\mW_v$.
\end{proof}

\subsection{A spectral tail bound for rank-$K$ gating}
\label{app:proof-rankk}

We quantify the error of restricting the edge-conditioned kernel to a rank-$K$ basis. Let
the edge descriptor $(\vdelta,\Delta s)$ range over its data-induced distribution and let
$\mW(\vdelta,\Delta s)\in\mathbb{R}^{C'\times C}$ denote the learned full operator family,
e.g.\ the operators induced by the $K{=}16$ model measured in
Appendix~\ref{app:rankk}. Write $\mathbf{w}=\mathrm{vec}(\mW)\in\mathbb{R}^{C'C}$ and let
the second-moment matrix $M=\mathbb{E}[\mathbf{w}\mathbf{w}^{\!\top}]\succeq 0$ have
eigenvalues $\lambda_1\ge\cdots\ge\lambda_{C'C}$ with orthonormal eigenvectors
$\mathbf{v}_c$; set $\mV_c=\mathrm{mat}(\mathbf{v}_c)\in\mathbb{R}^{C'\times C}$.

\begin{proposition}[Spectral tail bound]
\label{prop:spectral}
For every choice of $K$ basis operators
$\{\mB_c\}_{c=1}^{K}\subset\mathbb{R}^{C'\times C}$ and every choice of measurable
coefficient functions $g_c(\vdelta,\Delta s)$,
\begin{equation}
\label{eq:tailbound}
\mathbb{E}\,\Big\|\,\mW(\vdelta,\Delta s)-\sum_{c=1}^{K}g_c(\vdelta,\Delta s)\,\mB_c
\,\Big\|_F^{2}\;\ge\;\sum_{c>K}\lambda_c .
\end{equation}
Equality holds for the eigenbasis $\mB_c=\mV_c$ with the orthogonal-projection coefficients
$g_c(\vdelta,\Delta s)=\langle\mW(\vdelta,\Delta s),\mV_c\rangle_F$.
\end{proposition}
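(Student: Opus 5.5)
The plan is to vectorize everything and reduce the claim to the Eckart--Young--Mirsky/PCA optimality of principal subspaces for a random vector. Write $\mathbf{w}=\mathrm{vec}(\mW(\vdelta,\Delta s))$ and $\mathbf{b}_c=\mathrm{vec}(\mB_c)$. Since $\mathrm{vec}$ is an isometry from the Frobenius norm to the Euclidean norm, the left side of \eqref{eq:tailbound} equals $\mathbb{E}\,\|\mathbf{w}-\sum_c g_c\mathbf{b}_c\|_2^2$. For any fixed basis, let $S=\mathrm{span}\{\mathbf{b}_1,\dots,\mathbf{b}_K\}$, a subspace of dimension at most $K$, and let $\Pi_S$ be the orthogonal projector onto it. Pointwise in $(\vdelta,\Delta s)$, the vector $\sum_c g_c\mathbf{b}_c$ lies in $S$. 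Hence $\|\mathbf{w}-\sum_c g_c\mathbf{b}_c\|^2\ge\|\mathbf{w}-\Pi_S\mathbf{w}\|^2=\|\mathbf{w}\|^2-\mathbf{w}^\top\Pi_S\mathbf{w}$. Taking expectations, which requires $\mathbb{E}\|\mathbf{w}\|^2<\infty$ (I will assume this, since it is needed for $M$ to exist), gives
\[
\text{LHS}\ \ge\ \mathrm{tr}(M)-\mathrm{tr}(\Pi_S M).
\]
This step removes the dependence on the measurable coefficient functions entirely.

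The second step bounds $\mathrm{tr}(\Pi_S M)$ over all projectors of rank at most $K$. Writing $\Pi_S=\sum_{k\le K'}\mathbf{u}_k\mathbf{u}_k^\top$ with $K'\le K$ orthonormal $\mathbf{u}_k$, we have $\mathrm{tr}(\Pi_SM)=\sum_k\mathbf{u}_k^\top M\mathbf{u}_k\le\sum_{c\le K}\lambda_c$. This is Ky Fan's maximum principle. I would prove it quickly: expand each $\mathbf{u}_k$ in the eigenbasis and set $p_c=\sum_k\langle\mathbf{u}_k,\mathbf{v}_c\rangle^2$. Then $0\le p_c\le1$ by Bessel's inequality, $\sum_c p_c=K'\le K$, and $\mathrm{tr}(\Pi_SM)=\sum_c p_c\lambda_c$. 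A linear functional with nonincreasing nonnegative weights $\lambda_c$ under these constraints is maximized by putting $p_c=1$ on the top $K$ indices. Combining the two steps with $\mathrm{tr}(M)=\sum_c\lambda_c$ yields \eqref{eq:tailbound}. The main thing to handle carefully is this extremal step, together with degenerate cases: linearly dependent $\mathbf{B}_c$ (so $\dim S<K$) are covered because $\lambda_c\ge0$.

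For equality, take $\mB_c=\mV_c$ and $g_c=\langle\mW,\mV_c\rangle_F=\mathbf{v}_c^\top\mathbf{w}$. These coefficients are measurable because they are linear in $\mW$. Then $\sum_c g_c\mathbf{b}_c=\Pi_S\mathbf{w}$ with $S$ the top-$K$ eigenspace, so both inequalities become equalities. Directly, $\mathbb{E}\|\mathbf{w}-\Pi_S\mathbf{w}\|^2=\sum_{c>K}\mathbf{v}_c^\top M\mathbf{v}_c=\sum_{c>K}\lambda_c$.
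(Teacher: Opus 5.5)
Your proof is correct and follows the paper's route. You restrict to the span of the basis, take pointwise orthogonal projections to get $\mathrm{tr}(M)-\mathrm{tr}(\Pi_S M)$, and invoke Ky Fan's maximum principle, with equality at the top-$K$ eigenbasis. Beyond the paper's Gram--Schmidt reduction and citation of Ky Fan, you prove Ky Fan directly via the weights $p_c$ and explicitly cover linearly dependent $\mB_c$ using $\lambda_c\ge 0$.
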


\begin{proof}
The best-approximation error depends on $\{\mB_c\}$ only through
$\mathrm{span}\{\mB_c\}$, so by Gram--Schmidt we may take the $\mB_c$ orthonormal with
respect to the Frobenius inner product. For a fixed basis the error is minimized pointwise
in $(\vdelta,\Delta s)$ by the orthogonal projection
$g_c=\langle\mW,\mB_c\rangle_F$, and Pythagoras gives, with
$\mathbf{b}_c=\mathrm{vec}(\mB_c)$,
\begin{equation*}
\mathbb{E}\,\Big\|\mW-\sum_{c\le K}g_c\mB_c\Big\|_F^{2}
=\mathbb{E}\,\|\mathbf{w}\|_2^{2}-\sum_{c=1}^{K}\mathbb{E}\,\langle\mathbf{w},
\mathbf{b}_c\rangle^{2}
=\mathrm{tr}(M)-\sum_{c=1}^{K}\mathbf{b}_c^{\!\top}M\,\mathbf{b}_c .
\end{equation*}
By Ky Fan's maximum principle,
$\max\sum_{c=1}^{K}\mathbf{b}_c^{\!\top}M\mathbf{b}_c=\sum_{c=1}^{K}\lambda_c$ over
orthonormal sets $\{\mathbf{b}_c\}$, attained by the top-$K$ eigenvectors. The minimum
error is therefore $\mathrm{tr}(M)-\sum_{c\le K}\lambda_c=\sum_{c>K}\lambda_c$, attained
by $\mB_c=\mV_c$.
\end{proof}

\begin{corollary}[Relative error of the deployed model]
\label{cor:relerr}
The relative operator error of the optimal rank-$K$ factorization is the spectral tail
fraction,
$\mathbb{E}\|\mW-\widehat\mW_K\|_F^{2}/\,\mathbb{E}\|\mW\|_F^{2}
=1-\big(\sum_{c\le K}\lambda_c\big)/\mathrm{tr}(M)$.
With the top-8 energy fractions measured in Figure~\ref{fig:app-spectrum}
($89.9$--$98.2\%$ per block), rank-$8$ gating incurs a relative operator error of at most
$10.1\%$ in every backbone block, and at most $8.2\%$ in blocks $1$--$3$.
\end{corollary}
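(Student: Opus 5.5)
The corollary follows by normalizing Proposition~\ref{prop:spectral} by the total operator energy. The main point to verify is that the energy fractions reported in Figure~\ref{fig:app-spectrum} are the same eigenvalue ratios $\sum_{c\le K}\lambda_c/\mathrm{tr}(M)$ that the bound uses.

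\emph{Step 1: the denominator.} Since $\mathbf{w}=\mathrm{vec}(\mW)$, we have $\|\mW\|_F^2=\|\mathbf{w}\|_2^2$. Taking expectations,
\[
\mathbb{E}\|\mW\|_F^2=\mathbb{E}\,\mathrm{tr}(\mathbf{w}\mathbf{w}^{\!\top})=\mathrm{tr}(M)=\sum_{c}\lambda_c .
\]

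\emph{Step 2: the numerator.} Let $\widehat\mW_K$ denote the optimal rank-$K$ factorization, i.e.\ the minimizer over bases $\{\mB_c\}$ and measurable coefficients $\{g_c\}$. By Proposition~\ref{prop:spectral}, its error equals the tail $\sum_{c>K}\lambda_c$: the lower bound holds for every choice, and equality is attained by the eigenbasis $\mV_c$ with projection coefficients. Dividing by $\mathrm{tr}(M)$ gives
\[
\frac{\mathbb{E}\|\mW-\widehat\mW_K\|_F^2}{\mathbb{E}\|\mW\|_F^2}
=\frac{\mathrm{tr}(M)-\sum_{c\le K}\lambda_c}{\mathrm{tr}(M)}
=1-\frac{\sum_{c\le K}\lambda_c}{\mathrm{tr}(M)} .
\]
This requires $\mathrm{tr}(M)>0$, i.e.\ the operator family is not identically zero.

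\emph{Step 3: the numbers.} Set $K=8$ and insert the measured per-block top-8 energy fractions $e_b=\sum_{c\le8}\lambda_c^{(b)}/\mathrm{tr}(M^{(b)})$. The relative error of block $b$ is then $1-e_b$. The smallest fraction over all blocks is $e_{\min}=0.899$, so every block has error at most $1-0.899=10.1\%$. The smallest fraction over blocks 1--3 is $0.918$, so those blocks have error at most $8.2\%$. Each block has its own $M^{(b)}$, so the proposition is applied block by block.

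\emph{Main obstacle.} The algebra is immediate. The real difficulty is interpretive: the bound only holds if Figure~\ref{fig:app-spectrum} computes its "energy" exactly as the eigenvalues of the empirical uncentered second-moment matrix $M$ of $\mathrm{vec}(\mW)$, over the same edge distribution, rather than, say, a centered covariance or singular values of a stacked matrix. I would first check this against the definition in Appendix~\ref{app:rankk}.

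Two further caveats apply:
\begin{itemize}
\item The statement concerns the \emph{optimal} rank-8 factorization of the $K{=}16$ operators, not the separately trained $K{=}8$ model. I would phrase "deployed" accordingly.
\item Expectations are empirical averages over edges, so the identity is exact for the empirical measure.
\end{itemize}
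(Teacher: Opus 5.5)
Your proposal is correct and matches the paper's own (implicit) argument: the corollary is Proposition~\ref{prop:spectral}'s minimum error $\mathrm{tr}(M)-\sum_{c\le K}\lambda_c$ divided by $\mathbb{E}\|\mW\|_F^{2}=\mathrm{tr}(M)$, followed by inserting the measured top-8 fractions block by block. The interpretive check you flag also goes through: writing $\mathbf{A}=[\,\mathrm{vec}(\mB_1)\,|\,\cdots\,|\,\mathrm{vec}(\mB_{16})\,]$ gives $M=\mathbf{A}G\mathbf{A}^{\top}$, whose nonzero eigenvalues coincide with those of $G^{1/2}\mathbf{A}^{\top}\mathbf{A}\,G^{1/2}=G^{1/2}\,\mathrm{Gram}\,G^{1/2}$, the matrix diagonalized in Appendix~\ref{app:rankk}, so the reported fractions are exactly $\sum_{c\le 8}\lambda_c/\mathrm{tr}(M)$ for the uncentered second-moment matrix.
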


\begin{corollary}[Error propagation through the layer]
\label{cor:propagate}
Let $\widehat\mW$ be any rank-$K$ approximation of $\mW$ and write
$\Delta\mW_{ij}=\mW(\vdelta_{ij},\Delta s_{ij})-\widehat\mW(\vdelta_{ij},\Delta s_{ij})$.
Then the outputs of Eq.~\eqref{eq:hfkernel} satisfy
\begin{equation*}
\|\vh_i-\widehat\vh_i\|_2^{2}\;\le\;
\Big(\sum_{j\in\mathcal{N}(i)}\|\Delta\mW_{ij}\|_F^{2}\Big)
\Big(\sum_{j\in\mathcal{N}(i)}\|\mW_v\vx_j\|_2^{2}\Big) .
\end{equation*}
\end{corollary}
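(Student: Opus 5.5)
The plan is a direct algebraic argument: write the output difference edge by edge, then apply Cauchy--Schwarz twice. Recall Eq.~\eqref{eq:hfkernel}, $\vh_i=\sum_{j\in\mathcal{N}(i)}\mW(\vdelta_{ij},\Delta s_{ij})\,\vx_j$. By Lemma~\ref{lem:outerprod}, the kernel acts on the transformed content $\mW_v\vx_j$, since the basis operators are $\mB_k=P_c^{(k)}\mW_v$. I will therefore read $\mW$ and $\widehat\mW$ as the $C'\times C$ operator families built from the blocks $\sum_k \tilde g_k P_c^{(k)}$, acting on $\vh_j:=\mW_v\vx_j$. This is the convention under which the right-hand side contains $\|\mW_v\vx_j\|_2$ rather than $\|\vx_j\|_2$. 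The approximation $\widehat\mW$ (for example the optimal one of Proposition~\ref{prop:spectral}) is applied in the same way, giving $\widehat\vh_i$.

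\emph{Step 1 (linearity).} Subtracting the two outputs gives
\[
\vh_i-\widehat\vh_i=\sum_{j\in\mathcal{N}(i)}\Delta\mW_{ij}\,\mW_v\vx_j .
\]
This holds because the layer is linear in the kernel for fixed inputs, and the gate $\sigma_{ij}$ is already absorbed into $\tilde g$, hence into $\Delta\mW_{ij}$.

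\emph{Step 2 (triangle inequality and operator bound).} Bound the norm termwise:
\[
\|\vh_i-\widehat\vh_i\|_2\le\sum_j\|\Delta\mW_{ij}\mW_v\vx_j\|_2\le\sum_j\|\Delta\mW_{ij}\|_F\,\|\mW_v\vx_j\|_2 ,
\]
using $\|A\mathbf{u}\|_2\le\|A\|_{2}\|\mathbf{u}\|_2\le\|A\|_F\|\mathbf{u}\|_2$.

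\emph{Step 3 (Cauchy--Schwarz).} Apply Cauchy--Schwarz in $\mathbb{R}^{|\mathcal{N}(i)|}$ to the sequences $(\|\Delta\mW_{ij}\|_F)_j$ and $(\|\mW_v\vx_j\|_2)_j$, then square both sides. This yields exactly the claimed product bound.

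The only real obstacle is bookkeeping in Step 1: we must make sure the kernel and the approximation act on the same vector $\mW_v\vx_j$, with the gate inside the kernel. If $\mW$ is instead read as acting on $\vx_j$ (with $\mB_k$ including $\mW_v$), the same chain gives the bound with $\|\vx_j\|_2$ in place of $\|\mW_v\vx_j\|_2$. So the statement should be read in the block-operator convention of Lemma~\ref{lem:outerprod}, and I will state that convention explicitly. An optional final remark: taking expectations and applying Corollary~\ref{cor:relerr} turns the first factor into the spectral-tail fraction, which links this bound to Proposition~\ref{prop:spectral}.
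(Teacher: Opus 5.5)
Your proof is correct and follows the paper's one-line argument step for step: linearity gives $\vh_i-\widehat\vh_i=\sum_{j}\Delta\mW_{ij}\,\mW_v\vx_j$, then the triangle inequality, the bound $\|\Delta\mW_{ij}\mW_v\vx_j\|_2\le\|\Delta\mW_{ij}\|_F\|\mW_v\vx_j\|_2$, and Cauchy--Schwarz. Your explicit note that $\mW$ must here be read as the block operator $\sum_k\tilde g_kP_c^{(k)}$ acting on $\mW_v\vx_j$, rather than the $\mB_k=P_c^{(k)}\mW_v$ form of Eq.~\eqref{eq:hfkernel} (which would give $\|\vx_j\|_2$ in the second factor), is correct and makes explicit a convention the paper's proof uses without stating it.
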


\begin{proof}
$\vh_i-\widehat\vh_i=\sum_{j}\Delta\mW_{ij}\,\mW_v\vx_j$; apply the triangle inequality,
the bound $\|\Delta\mW_{ij}\mW_v\vx_j\|_2\le\|\Delta\mW_{ij}\|_F\,\|\mW_v\vx_j\|_2$, and
Cauchy--Schwarz.
\end{proof}

\begin{remark}[The tail bound is a floor, not the achieved error]
Eq.~\eqref{eq:tailbound} lower-bounds the error of \emph{every} rank-$K$ scheme whose
coefficients are arbitrary functions of the edge descriptor. \method's gates are further
constrained to the hypothesis class of the bucketed weight network
(Eq.~\eqref{eq:weightnet}), so the achieved error lies above the spectral tail; the
saturation of the $K$-sweep in Figure~\ref{fig:app-rankk-sweep} indicates that at $K{=}8$ the
residual error is dominated by this realizability gap rather than by the rank.
\end{remark}

\begin{remark}[The realizability gap vanishes with capacity]
\label{rem:universality}
The edge descriptor $(\vdelta,\Delta s)$ ranges over a compact domain, the bucket map is
piecewise constant, and within each bucket the weight network is an MLP with LeakyReLU
activations---a universal approximator of continuous functions. Any measurable
coefficient map $g_c(\vdelta,\Delta s)$ can therefore be approximated uniformly to
arbitrary precision as the hidden width grows, so the gap between the achieved rank-$K$
error and the spectral floor of Proposition~\ref{prop:spectral} is a capacity effect, not
a structural limit of the factorization.
\end{remark}

\begin{remark}[Sequence clamping]
The discretization $\mathrm{bucket}(\Delta s)$ is exact for $|\Delta s|<\ell/2$ (each
integer offset owns a bucket) and merges all pairs with $|\Delta s|\ge\ell/2$ into the two
boundary buckets. The smooth gate $\sigma_{ij}$ is decreasing in
$\bar s=|\Delta s_{ij}|/(\ell/2)$ and vanishes as $\bar d\,\bar s\to 1$, so the merged
pairs---those maximal in sequence distance---are precisely the ones whose contribution is
softly removed, which attenuates the clamping error in the layer output.
\end{remark}

\subsection{Operator spectrum measurement}
\label{app:rankk}

To substantiate Proposition~\ref{prop:rankk} we measure the spectrum of the learned full
operators of a $K{=}16$ \methodpocket model---deliberately over-parameterized relative to the
deployed $K{=}8$, so the measurement reveals how many gating directions the model actually
uses. For each block we collect the gate coefficients $g(\vdelta_{ij})\in\mathbb{R}^{K}$ over
all edges of 64 validation structures into the second-moment matrix
$G=\mathbb{E}[gg^\top]$; with the Gram matrix of the basis matrices,
$\mathrm{Gram}_{cc'}=\langle \mB_c,\mB_{c'}\rangle_F$, the operator energy is
$\mathbb{E}\|\mW\|_F^2=\mathrm{tr}(G\,\mathrm{Gram})$, and diagonalizing
$G^{1/2}\,\mathrm{Gram}\,G^{1/2}$ yields the operator spectrum. We report the fraction of
its total eigenvalue energy captured by the top-$8$ eigendirections
(Figure~\ref{fig:app-spectrum}).

\begin{figure}[!h]
\centering
\includegraphics[width=0.9\textwidth]{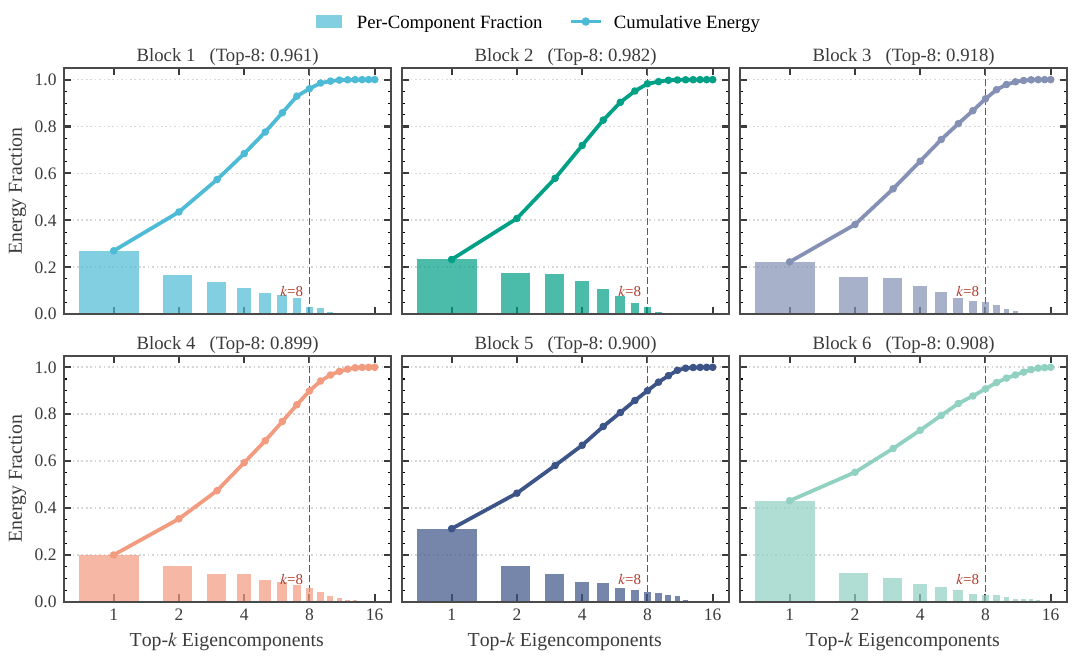}
\caption{Per-block operator spectrum of the $K{=}16$ \methodpocket model. Light bars
are the individual eigenvalue-energy fractions; the colored line is the cumulative
energy. The dashed line marks the deployed rank $K{=}8$; panel titles report the
top-8 fraction. Every block's spectrum decays rapidly, and deep blocks are no flatter
than shallow ones.}
\label{fig:app-spectrum}
\end{figure}

Consistent with this measurement, the $K$-sweep on pocket detection
(Figure~\ref{fig:app-rankk-sweep}) shows that doubling the basis from $K{=}8$ to $16$ (a
$31\%$ parameter cost) yields no AP@0.3 gain and only $+0.4$ AP@0.5
points---accuracy saturates where the spectrum does.

\end{document}